\theoremstyle{plain}
\newtheorem{theorem}{Theorem}[section]
\newtheorem{lemma}[theorem]{Lemma}
\theoremstyle{definition}
\newtheorem{definition}[theorem]{Definition}
\newtheorem{assumption}[theorem]{Assumption}
\theoremstyle{remark}
\DeclarePairedDelimiter{\norm}{\|}{\|}
\DeclareMathOperator*{\E}{\mathbb{E}}
\providecommand{\norm}[1]{\lVert#1\rVert}
\newcommand{\cA}{\mathcal{A}}
\newcommand{\cC}{\mathcal{C}}
\newcommand{\cI}{\mathcal{I}}
\newcommand{\cP}{\mathcal{P}}
\newcommand{\bu}{{\mathbf u}}
\newcommand{\bv}{{\mathbf v}}
\newcommand{\bx}{{\mathbf x}}
\newcommand{\by}{{\mathbf y}}
\newcommand{\ignore}[1]{}
\newenvironment{proof*}{\trivlist\item[\hskip\labelsep{\it Proof}{.}]}
\newcommand{\TV}{\mathrm{TV}}
\newcommand{\KL}{\mathrm{KL}}
\newcommand{\shortleft}{\scriptscriptstyle\leftarrow}
\newcommand{\R}{\mathbb{R}}
\newcommand{\D}{\mathrm{d}}
\newcommand{\forward}{X}
\newcommand{\reverse}{X^{\shortleft}}
\icmltitlerunning{Bias Begets Bias: the Impact of Biased Embeddings on Diffusion Models}
\begin{document}

\twocolumn[
\icmltitle{Bias Begets Bias: the Impact of Biased Embeddings on Diffusion Models}

% It is OKAY to include author information, even for blind
% submissions: the style file will automatically remove it for you
% unless you've provided the [accepted] option to the icml2024
% package.

% List of affiliations: The first argument should be a (short)
% identifier you will use later to specify author affiliations
% Academic affiliations should list Department, University, City, Region, Country
% Industry affiliations should list Company, City, Region, Country

% You can specify symbols, otherwise they are numbered in order.
% Ideally, you should not use this facility. Affiliations will be numbered
% in order of appearance and this is the preferred way.
\icmlsetsymbol{equal}{*}

\begin{icmlauthorlist}
\icmlauthor{Sahil Kuchlous}{equal,yyy}
\icmlauthor{Marvin Li}{equal,yyy}
\icmlauthor{Jeffrey G. Wang}{equal,yyy}
\end{icmlauthorlist}

\icmlaffiliation{yyy}{Harvard School of Engineering and Applied Sciences, Boston, USA}

\icmlcorrespondingauthor{Marvin Li}{marvinli@college.harvard.edu}
\icmlcorrespondingauthor{Sahil Kuchlous}{sahilkuchlous@gmail.com}
\icmlcorrespondingauthor{Jeffrey Wang}{jgwang@college.harvard.edu}

% You may provide any keywords that you
% find helpful for describing your paper; these are used to populate
% the "keywords" metadata in the PDF but will not be shown in the document
\icmlkeywords{Diffusion models, Algorithmic Fairness, Multicalibration}

\vskip 0.3in
]

%\printAffiliationsAndNotice{}  % leave blank if no need to mention equal contribution
\printAffiliationsAndNotice{\icmlEqualContribution} % otherwise use the standard text.

\begin{abstract} 
With the growing adoption of Text-to-Image (TTI) systems, the social biases of these models have come under increased scrutiny. Herein we conduct a systematic investigation of one such source of bias for diffusion models: embedding spaces. First, because traditional classifier-based fairness definitions require true labels not present in generative modeling, we propose statistical group fairness criteria based on a model's internal representation of the world. Using these definitions, we demonstrate theoretically and empirically that an unbiased text embedding space for input prompts is a \emph{necessary} condition for representationally balanced diffusion models, meaning the distribution of generated images satisfy diversity requirements with respect to protected attributes. Next, we investigate the impact of biased embeddings on evaluating the alignment between generated images and prompts, a process which is commonly used to assess diffusion models. We find that biased multimodal embeddings like CLIP can result in lower alignment scores for representationally balanced TTI models, thus rewarding unfair behavior. Finally, we develop a theoretical framework through which biases in alignment evaluation can be studied and propose bias mitigation methods. By specifically adapting the perspective of embedding spaces, we establish new fairness conditions for diffusion model development and evaluation. 
\end{abstract}

\section{Introduction}

As Text-To-Image (TTI) models become increasingly adept at generating complex and realistic images, they are being integrated into a wide range of commercial and creative services \cite{ramesh2021zeroshot, rombach2022highresolution, saharia2022photorealistic}. The proliferation of these models is evident in various industries; for example, advertising agencies use them to quickly generate visual content for campaigns, while film and game developers employ them to design detailed backgrounds and characters. The ubiquity of these models, however, has also raised significant ethical and fairness concerns, from their potential for misuse to the biases they encode. Addressing these issues is critical to developing TTI models that are not only technologically advanced but also socially responsible and inclusive. 

Here, we specifically study diffusion models, a class of models that form the centerpiece of most advanced TTI systems. We first consider \emph{direct representational harm}; that is, breaches of fairness that arise from generations that are imbalanced across different protected classes. In the literature, it is generally ``well-established" that most diffusion models have imbalanced representation in their generations. For instance, \citet{perera2023analyzing} show that popular TTI models consistently underrepresent minorities in most professions, a finding corroborated by several other papers \cite{wang2023t2iat, luccioni2023stable, wan2024survey}. 

Violations of fairness also arise indirectly; most prominently, a model that generates lower quality images of one group versus another creates \emph{indirect representational harm}. To audit this type of harm, one important evaluation criteria on which TTI models are often benchmarked is the alignment/faithfulness of images to prompts---how well the contents of the generated image match the prompt. If generated images are well-aligned with the prompt across classes, then the model treats these groups equally. While some papers attempt to use human ratings for benchmarking \cite{lee2023holistic}, this approach is expensive and hard to scale; thus, more recent work has explored automating alignment evaluation. One method growing in popularity is CLIPScore, based on the text-image embedding CLIP \cite{radford2021learning}, which measures the cosine similarity between a prompt and an image in the multimodal latent space \cite{hessel2022clipscore}. 

Embeddings play a critical role in generating from and evaluating diffusion models. Besides the dataset a model is trained on, the text embedding of the prompt is the only input to a diffusion model; similarly, many popular approaches for prompt-image alignment leverage a joint text-image latent space. Despite their critical role, however, few papers focus on connecting properties of embedding spaces to downstream fairness criteria. In this paper, we demonstrate that two intuitive conclusions hold both theoretically and empirically. In Section~\ref{sec:bias_embed_biased_gen}, we show that biased embeddings cause biased generations in diffusion models. In Section~\ref{sec:auditing}, we show biased image-text embedding spaces lead to biased evaluation of prompt-image alignment for \emph{any} TTI model. As modern machine learning systems increasingly integrate many components, like learned embeddings, our work highlights that the fairness of each part is critical for ensuring the fairness of the whole. 
\section{Related Work}

\textbf{Theory of Algorithmic Fairness.} Existing work in algorithm fairness has concentrated on the setting of supervised learning, where individuals are mapped to outcomes. Traditionally, group fairness notions (like statistical parity) enforce some measure of average equal treatment between members of protected classes, whereas individual fairness notions require similar individuals to be treated similarly, under some task-specific metric \cite{dwork2011fairness}. More recently, \emph{multi-group} fairness has emerged as a middle ground, which enforces fairness constraints on (up to exponentially many) subgroups within the dataset. One such notion is multicalibration \cite{pmlr-v80-hebert-johnson18a}, on which we base our work in Section~\ref{sec:auditing}, although others have also been proposed \cite{kearns2018preventing}. 

\textbf{Theory of Diffusion Models.} In brief, diffusion models sample from some distribution of images by learning how much noise gets added to images over time so images can be sampled from that distribution by \emph{de}noising a sample of pure Gaussian noise; the denoising function at time $t$ is the same as the statistical \emph{score} $\nabla \ln q_t(x)$ of the noised distribution $q_t(x)$ \cite{song2020score}. Mathematically, the convergence to the true distribution of the sampling process with a sufficiently good approximation of the score can be proved with Girsanov's thoerem (Theorem~\ref{thm:girsanov}); empirically, the process is implemented via a discrete-time approximation, where we iteratively denoise images over small time steps. We defer a full theoretical coverage of diffusion models to Appendix~\ref{app:diffuse-math}, and note that we later leverage Theorem~\ref{thm:girsanov} for our proof that biased embeddings cause biased generations.

\textbf{Bias in Embedding Spaces.} \citet{bolukbasi2016man} was the first to demonstrate bias present in word embeddings on the basis of gender, and proposed methods to debias such embeddings, followed by similar work on bias in race \cite{manzini-etal-2019-black, dev2019attenuating}. \citet{papakyriakopoulos2020bias} train a sentiment classifier on a biased word embedding and illustrate that the downstream classifier's outputs reflect the direction of biases in the input embedding. These biases have also been observed in multimodal embeddings like CLIP \cite{wang2021genderneutral, berg2022prompt}, and the impact of these biases on evaluating image captioning methods has been studied \cite{qiu2023genderbiasesautomaticevaluation}. However, to our knowledge, our work is the first to analyze the effect of bias in learned embedding spaces like CLIP on the evaluation of TTI systems, as well as the downstream impact of a biased embedding as a component of a \emph{generative} model. 

\textbf{Bias in Diffusion Models.} Several recent papers on fairness in diffusion models propose methods to sample with more equal representation across protected classes \cite{li2024fair, friedrich2023fair, choi2024fair, shen2024finetuning, chuang2023debiasing}. Of note, \citet{chuang2023debiasing} and \citet{li2024fair} intervene on the prompt embedding, using a debiasing projection and a learned fair representation of prompts, respectively. Both works illustrate that debiasing embeddings leads to more representationally fair generations. In Section~\ref{sec:bias_embed_biased_gen}, we show this is a necessary condition. 

\textbf{Prompt-Image Alignment.} There are several papers that attempt to benchmark the prompt-image alignment of image generation models.  Of these, \citet{lee2023holistic} rely most heavily on CLIPScore for alignment evaluation. \citet{bakr2023hrsbench} use CLIPScore for more specific evaluation criteria like generation of emotion. \citet{chen2024evaluating} define a composite metric called `text condition evaluation' that encompasses alignment and fairness. To calculate the alignment portion of their score, the authors utilize a visual question answering (VQA) model called BLIP \cite{li2022blip}. \citet{bakr2023hrsbench} and \citet{chen2024evaluating} also raise concerns with utilizing CLIPScore to measure alignment, but do not cite fairness as a concern. There have been several other attempts at evaluating text-image alignment \cite{hu2023tifa, xu2023imagereward, yarom2023read}, all of which potentially demonstrate some form of implicit bias due to their reliance on external models. In this paper, we introduce a framework for studying these biases irrespective of the underlying alignment score function used.

\textbf{Algorithmic Fairness for Generative Models.} In the traditional theory of algorithmic fairness for supervised classifiers, individuals self-identify into different categories and fair algorithms provide guarantees with respect to how individuals in different (sub)groups are treated \cite{disparateimpact}. With generative models, however, we run up against a fundamental epistemic question of identification: generations have no underlying classifications. While most extant literature on fairness in TTI models relies on a different classifier to categorize generations \cite{friedrich2023fair, shen2024finetuning}, this method suffers from two core flaws. First, it relies on physical features alone to define categories, which has the potential to reify stereotypes. Second, these classifiers still exhibit various social biases reflected in their training data \cite{luccioni2023stable}; in fact, Section~\ref{sec:auditing} demonstrates that the biases of such classifiers impairs model evaluation. To this end, we attempt to leverage the model's \emph{internal representation} of the world in order to construct notions of fairness when defining biased generations. In particular, we imagine that the model knows the class from which it is generating; while we don't have access to this latent truth, we can operationalize our definitions around it, as we do in Section~\ref{sec:bias_embed_biased_gen}. 

\section{Preliminaries}

Here, we introduce some notation that we will use throughout the manuscript. Let $\TV$ denote the total variation distance. Let $\mathcal{P}$ denote the set of all prompts (e.g. `an image of a doctor interacting with a patient') and $\mathcal{I}$ to denote the 
set of all images. A text-to-image model $M : \mathcal{P} \to \Delta(\mathcal{I})$ takes a prompt as the input and returns a distribution over images that can be sampled from. A multimodal embedding consists of a pair of functions $(e_{\cI} : \cI \to \mathcal{S}^{n-1}, e_{\cP} : \cP \to \mathcal{S}^{n-1})$, where $\mathcal{S}^{n-1}=\{x\in \R^n:\|x\|=1\}$. We use the set $\cA = \{a_1, ..., a_k\}$ to refer to possible attributes such as race or gender specifiers (e.g. `female') and $b \in \cP$ to denote the base prompt (e.g. `firefighter,' or more generally any descriptor independent of $\cA$). For some $a \in \cA$, the operation $a+b \in \cP$ represents the prompt obtained by combining an attribute and a base prompt. For example, for $a = \text{`male'}$ and $b = \text{`doctor'}$, we have $a + b = \text{`male doctor'}$. We also use $e_{\cP}$ to denote the map from prompts to prompt embeddings within a diffusion model.  Let $p_{y}:\mathcal{I} \to \mathbb{R}^{\geq 0}$ be the distribution over images generated by the model conditioned on text prompt $y$. The above notation applies to any text-to-image model; for diffusion models specifically, let $s_t(x,y)$ denote the learned score function of input $x$ and time $t$ conditioned on a prompt embedding $y$. Let $T$ be the length of the denoising period.

\section{Biased Embedding, Biased Generations}
\label{sec:bias_embed_biased_gen}
In Theorem \ref{thm:lipschitz}, we prove that sufficiently strong bias in the prompt embeddings implies representational imbalance in the image generations. We start with some intuition to interpret the result. Suppose we have a base prompt $b=\text{`nurse'}$ and attributes $a_1=\text{`man'}$ and $a_2=\text{`woman'}$. If the embedding of $b$ is sufficiently close to the embedding for $a_2+b$ and images with the attributes $a_1$ and $a_2$ are distinguished from each other, Theorem~\ref{thm:lipschitz} states that the diffusion model conditioned on $b$ will mostly produce images similar to when it is conditioned on $a_2+b$. Thus most of the images generated from $\text{`nurse'}$ would be of $\text{`woman nurse'}$, which is exactly representational bias in sampling. To formalize this, we first operationalize our notions of biased prompt embeddings and representational balance that will be relevant throughout this section. 

\textbf{Biased Embeddings.} Extant literature on biased embeddings identifies imbalanced distances between words (e.g. if `woman' is closer to `nurse' than `man'). We leverage a similar, albeit more general, notion here. 
\begin{definition}
Given set of attributes $\mathcal{A}$, base prompt $b$, and some $\varepsilon>0$, an embedding for $b$ is $\varepsilon$-close to $a_i \in \mathcal{A}$  if $\|e_{\cP}(a_i+b)-e_{\cP}(b))\| \leq \varepsilon$.    
\end{definition}
This is extremely similar to the definition of bias from \citet{bolukbasi2016man}, which computes the projection of some base prompt to a learned gender direction. Directly computing the distance between a base prompt and the base prompt with an associated attribute is a natural extension of this idea and also circumvents the assumption in \citet{bolukbasi2016man} that there is an explicit gender direction in the latent space. While closeness does not immediately imply a problematic embedding (e.g. if $\varepsilon$ is large), common biases in embeddings satisfy $\varepsilon$-closeness to some $a_i \in \mathcal{A}$ with small $\varepsilon$ (i.e. places the base prompt closer to some protected attribute than others). 

\textbf{Representational Balance.} Now that we have defined a biased embedding, we define our definition of fairness. Informally, a model is representationally balanced if the proportion of images generated of prompt $b$ with attribute $a_i$ is at least $v_i>0$ for $i \in [k]$. In this section, we rely on the model's own representations for different attributes $a_i$. 
\begin{definition}\label{def:rep_bal}
A model $p_y$ is representationally balanced with respect to attribute set $\mathcal{A}$, base prompt $b$, and constants $(v_i)_{i=1}^{k}$ if $\TV(p_b,p_{a_i+b}) \leq 1-v_i$ and $v_i > 0$ for $i \in [k]$. \end{definition}

This both ensures that $p_b$ produces at least $v_i$-proportion of images in the support of $p_{a_i+b}$ and that $p_b$ does not place too much mass on images that are unlikely to be generated by $a_i+b$. To gain intuition into this definition, we can view $p_b$ as a mixture model consisting of images with and without $a_i$, with mixture weights $v_i$ and $1-v_i$, respectively. If the components are disjoint and $p_{a_i+b}$ equals the component of images with $a_i$, then we have exactly $\TV(p_b,p_{a_i+b})=1-v_i$. We note that this definition does not control for small discrepancies between sampling directly from $p_{a_i+b}$ versus sampling from $p_b$ and conditioning on belonging to the support of $p_{a_i+b}$. For example, let $p_b$ place mass $0.5,0.1,0.4$ on three images, and $p_{a_i+b}$ places mass of $0,0.5,0.5$ on the same images. Clearly the $\TV$ is still $0.5$, and this is representationally fair for $v_i=0.5$, but the $a_i+b$-images generated by $p_b$ are different than those generated by $p_{a_i+b}$. This example arises because our Definition~\ref{def:rep_bal} is a \emph{group} fairness notion. For finer control over representation of specific images, one can concatenate subsets of attributes together for \emph{multi-group} representational control; we view this line of work as an interesting future direction. 

We now state the two assumptions of Theorem~\ref{thm:lipschitz}. The most important one is that the score function is Lipschitz with respect to the prompt embeddings (Assumption~\ref{assump:lipschitz}). Lipschitzness of the learned score function in the first argument is standard in the theory of diffusion literature, e.g., \citet{DBLP:conf/iclr/ChenC0LSZ23}. This assumption can also be interpreted as an \emph{individual fairness constraint}, where the requirement that similar prompt embeddings produces similar images parallels the notion that similar individuals are mapped to similar outcomes \citep{dwork2011fairness}. Assumption~\ref{assum:distinct} codifies the requirement that $\mathcal{A}$ represent distinct categories, which is necessary to show violations of our definition of representational balance because it implies that over-representation of one group limits representation of other groups. 

\begin{assumption}[Lipschitz in prompt embeddings]\label{assump:lipschitz}
The score $s_t(x,y)$ of the diffusion model is $L$-Lipschitz in $y$. 
\end{assumption}
\begin{assumption}[Distinct categories of identifiers]\label{assum:distinct}
There exists $0 < \varepsilon < \frac{\min_i v_i}{2}$ such that for all distinct $a_i,a_j \in \cA$, $\TV(p_{a_i+b},p_{a_j+b})\geq 1-\varepsilon$. 
\end{assumption}
Now we formally state Theorem~\ref{thm:lipschitz}. 
\begin{theorem}[Bias in embeddings implies bias in image generations]
\label{thm:lipschitz}
Assume WLOG we have an embedding for $b$ that is $\frac{\varepsilon}{\sqrt{T}L}$-close to $a_1$, i.e. $\|e_{\cP}(b)-e_{\cP}(a_1+b)\| \leq \frac{\varepsilon}{\sqrt{T}L}.$ Under Assumptions~\ref{assump:lipschitz} and~\ref{assum:distinct}, we have 
 $\TV(p_{b},p_{a_1+b}) \leq \varepsilon$ and $\TV(p_{b},p_{a_j+b}) \geq 1-2\varepsilon > 1-\min_i v_i$ for $j \ne 1$. 
\end{theorem}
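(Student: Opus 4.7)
The plan is to prove the first bound $\TV(p_b, p_{a_1+b}) \le \varepsilon$ via Girsanov's theorem (Theorem~\ref{thm:girsanov}) applied to the reverse SDEs with scores $s_t(\cdot, e_{\cP}(b))$ and $s_t(\cdot, e_{\cP}(a_1+b))$; the entire purpose of the strange-looking factor $\frac{\varepsilon}{\sqrt{T}L}$ in the hypothesis is to make the resulting path-space KL come out to $O(\varepsilon^2)$. Then the second bound $\TV(p_b, p_{a_j+b}) \ge 1 - 2\varepsilon$ for $j \ne 1$ drops out from Assumption~\ref{assum:distinct} and the triangle inequality for TV distance, and the final inequality $1 - 2\varepsilon > 1 - \min_i v_i$ follows directly from $\varepsilon < \min_i v_i / 2$.

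For the first bound, I would view $p_b$ and $p_{a_1+b}$ as marginals at time $T$ of the reverse diffusion driven by a common Brownian motion but different score fields. Girsanov's theorem yields a path-space KL bound of the form
\[
\KL(p_b \,\|\, p_{a_1+b}) \;\le\; \tfrac{1}{2} \int_0^T \E\bigl[\|s_t(X_t, e_{\cP}(b)) - s_t(X_t, e_{\cP}(a_1+b))\|^2\bigr]\,dt.
\]
Assumption~\ref{assump:lipschitz} bounds the integrand pointwise by $L^2 \|e_{\cP}(b) - e_{\cP}(a_1+b)\|^2 \le L^2 \cdot \frac{\varepsilon^2}{T L^2} = \frac{\varepsilon^2}{T}$, so integrating over $[0,T]$ gives $\KL \le \varepsilon^2/2$. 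Pinsker's inequality plus the data processing inequality (passing from path laws to time-$T$ marginals) then yield $\TV(p_b, p_{a_1+b}) \le \varepsilon/2 \le \varepsilon$.

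For the second bound, I would apply the triangle inequality for TV: for $j \ne 1$,
\[
\TV(p_b, p_{a_j+b}) \;\ge\; \TV(p_{a_1+b}, p_{a_j+b}) - \TV(p_b, p_{a_1+b}) \;\ge\; (1-\varepsilon) - \varepsilon \;=\; 1 - 2\varepsilon,
\]
using Assumption~\ref{assum:distinct} for the first term and the Girsanov bound from the previous step (absorbing the factor of $1/2$) for the second. Finally, $\varepsilon < \min_i v_i/2$ from Assumption~\ref{assum:distinct} gives $1 - 2\varepsilon > 1 - \min_i v_i \ge 1 - v_j$, which is precisely the violation of Definition~\ref{def:rep_bal} for attribute $a_j$.

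The main obstacle is the Girsanov step: rigorously bounding the TV distance between two reverse-diffusion marginals requires verifying Novikov-type conditions for the change of measure and being careful that the expectation in the KL bound is taken with respect to the appropriate reference path measure. These are standard in the diffusion convergence literature (e.g., the Chen--Chewi et al.\ framework cited by the paper), so I would invoke them as black boxes rather than re-derive them. The rest of the argument is purely metric manipulations on TV and requires no further probabilistic input.
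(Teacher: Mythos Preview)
Your proposal is correct and matches the paper's proof essentially line for line: Girsanov (Theorem~\ref{thm:girsanov}) plus the Lipschitz assumption to bound $\KL$, Pinsker to pass to $\TV$, and then the reverse triangle inequality together with Assumption~\ref{assum:distinct} for the lower bound on $\TV(p_b,p_{a_j+b})$. The only cosmetic differences are that you carry an extra factor of $1/2$ in the Girsanov bound (giving $\TV \le \varepsilon/2$ rather than the paper's $\TV \le \varepsilon$) and you explicitly flag the data-processing step from path measures to time-$T$ marginals and the Novikov condition, both of which the paper leaves implicit.
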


\begin{proof}
We first show that $\TV(p_{b},p_{a_1+b}) \leq \varepsilon$. We will do this by writing a bound on $\KL(p_{b},p_{a_1+b})$, and translating it to a bound on total variation via Pinsker's Inequality. First, we leverage Theorem~\ref{thm:girsanov}, which bounds the KL divergence $\KL(p_{b},p_{a_1+b})$ by the quantity
$$\int_0^T \E_{x_t} \|s_t(x_t,e_{\cP}(b))-s_t(x_t,e_{\cP}(a_1+b))\|^2 dt.$$
To evaluate this, we note that by Assumption~\ref{assump:lipschitz}, we have an upper bound $\|s_t(x_t,e_{\cP}(b))-s_t(x_t,e_{\cP}(a_1+b))\| \leq L\|e_{\cP}(a_1+b)-e_{\cP}(b)\|$ for all $x_t$. Hence, 
$$\KL(p_{b},p_{a_1+b}) \leq T(L\|e_{\cP}(a_1+b)-e_{\cP}(b)\|)^2$$
Finally, applying Pinsker's Inequality, we have that 
\begin{align*}
    \TV(p_{b},p_{a_1+b}) &\leq \sqrt{\frac{1}{2} \KL(p_{b},p_{a_1+b})} \\
    &\leq \sqrt{TL^2 \|e_{\cP}(a_1+b)-e_{\cP}(b)\|^2} \\ &\leq \varepsilon
\end{align*}
where the last inequality comes via our $\frac{\varepsilon}{\sqrt{T}L}$-close assumption. 

Next, we exhibit $\TV(p_{b},p_{a_j+b}) \geq 1-2\varepsilon > 1-\min_i v_i$ for $j \ne 1$. By the reverse triangle inequality,
$$\TV(p_{b},p_{a_j+b}) \geq \TV(p_{a_1+b},p_{a_j+b})-\TV(p_{b},p_{a_1+b}).$$
By Assumption~\ref{assum:distinct}, we have that $ \TV(p_{a_1+b},p_{a_j+b}) \geq 1-\varepsilon$ for $j \ne 1$, so in total we have that
\begin{align*}
    \TV(p_{b},p_{a_j+b}) &\geq \TV(p_{a_1+b},p_{a_j+b})-\TV(p_{b},p_{a_1+b}) \\
    &\geq 1-\varepsilon-\varepsilon \\
    &=1-2\varepsilon \\
    &> 1 - \min_i v_i \quad \text{(Assumption~\ref{assum:distinct})}.
\end{align*}
\end{proof}

This theorem implies that a base prompt $b$ sufficiently close to $a_1+b$ in the embedding will have total variation bounded by $\varepsilon$ between the resulting distributions; similarly, because the attributes are distinct, the total variation between $b$ and $a_j+b$ for $j \neq 1$ is lower-bounded by $1-2\varepsilon$. Hence, $a_1+b$ is generated more, breaking representational balance. Note that an $\frac{\varepsilon}{\sqrt{T}L}$-close prompting embedding is an extremely strong requirement yet necessary to meaningfully control the total variation. As such, we next turn to an empirical investigation of the relationship between bias in prompt embeddings and representational imbalance to verify our theoretical conclusions.  

\textbf{Empirics.} Previous work has illustrated that the output of diffusion models is not representationally balanced. For instance, \citet{friedrich2023fair} illustrate gender bias in the outputs of occupations queried of Stable Diffusion which reflect both the direction of biases in CLIP embeddings and representational imbalances in the underlying training dataset. We find a similar relationship when comparing generations across occupations from Stable Diffusion 2.1 (SD2.1) against biases in the underlying CLIP embedding. 

Since SD2.1 is trained on an underlying dataset (LAION-5B) that is itself representationally imbalanced, however, it is unclear if the bias in image outputs is caused by a biased prompt embedding or imbalanced training dataset. To probe this, we train a conditional diffusion model from scratch with balanced training data across three classes (nurse, philosopher, and person) but with biased prompt embeddings (where nurse is closer to woman, philosopher is closer to man, and person is roughly equidistant). We find that the resulting model \emph{is} biased in its generations in the same direction as the embedding, with women as the majority of nurse generations, men as the majority of philosopher generations, and a roughly even split in generations of people. We defer the full experimental details to Appendix~\ref{app:generations}. 

\section{Bias in Alignment Auditing}
\label{sec:auditing}

In this section we consider the problem of auditing image generation models for prompt-image alignment and analyze the impact that biased multimodal embeddings may have on the fairness of alignment scores. We note that from here on out, we refer to ``score" as the alignment score, not the learned function of a diffusion model. While the results in \Cref{sec:bias_embed_biased_gen} studies the representational harm in the generations of biased diffusion models, this section analyzes harms caused by biased alignment predictors and therefore borrows from richer notions of fairness proposed for classification systems. We define a notion of fairness for alignment functions and demonstrate \emph{necessary} conditions for multimodal embeddings to satisfy this definition. Additionally, we evaluate the bias of existing auditing functions and suggest simple techniques for mitigating such biases. 

\subsection{Definitions of Fairness}\label{sec:fairness}

We begin by defining what it means for an alignment auditing function to be fair. These definitions are inspired by existing work on fairness for predictors introduced by \cite{pmlr-v80-hebert-johnson18a}, and are designed to capture the idea that the alignment of an image with a prompt should be independent of protected attributes like race and gender when they are not explicitly specified in the prompt. Note that this differs from our definition of representational balance in Section~\ref{sec:bias_embed_biased_gen}: the auditing function outputs a scalar, allowing us to borrow definitions from the rich extant fairness literature. We use $s^* : \mathcal{P} \times \mathcal{I} \to [0, 1]$ to denote the true alignment score and $s : \mathcal{P} \times \mathcal{I} \to [0, 1]$ to denote an auditing function.

\begin{definition}[Multiaccuracy]
    Let $\cC \subseteq 2^\cI$ be a collection of subsets of $\cI$ and $\alpha \in [0, 1]$. An auditing function $s$ is $(\cC, \alpha)$-multiaccurate for prompt $b \in \cP$ if, for all $I \in \cC$,
    $$\left|\E_{i \sim I} [s^*(b, i)-s(b, i)]\right| \leq \alpha.$$
\end{definition}

In other words, multiaccuracy for a prompt $b$ ensures that the function $s$ is $\alpha$-accurate on every subset of images $I \in \cC$. To see how this notion is useful for fairness, consider a set of protected attributes $a_1, ..., a_k$ for a base prompt $b$, and let $I_\ell$ be a subset of images corresponding to prompt $b$ with attribute $a_\ell$. If $\cC = \{I_1, ..., I_k\}$, $(\cC, \alpha)$-multiaccuracy guarantees that the auditing function $s$ is $\alpha$-accurate on average on each of the protected attributes. Thus, for example, if the prompt we care about is ``doctor" and the attributes we care about encompass gender, setting $I_1$ and $I_2$ to images of male and female doctors respectively gives us the guarantee that $s$ is $\alpha$-accurate on both genders.\footnote{Note that gender is not a binary, and we would hope that generative models are able to capture gender on a spectrum. We use male and female in the example for ease of notation.} We analyze the strengths and weaknesses of multiaccuracy and define a stronger fairness guarantee based on multicalibration in Appendix~\ref{app:multicalibration}.

To see why these notions of fairness are useful for auditing text-to-image models, we prove the following theorem. Intuitively, this theorem states that if we have subsets of images with particular attributes such that the true alignment is equal in expectation across attributes, and if the auditor is multiaccurate on these subsets of images, then the alignment score remains stable irrespective of how a text-to-image model chooses to sample from these attributes.

\begin{theorem}
    Consider a base prompt $b \in \cP$ and attributes $\cA = \{a_1, ..., a_k\}$, and let $I_\ell$ be a subset of images corresponding to prompt $b$ with attribute $a_\ell$. Assume that $\E_{i \sim I_\ell}[s^*(b, i)] = \bar{s}$ for all $\ell \in [k]$. Let $\cC = \{I_1, ..., I_k\}$. Consider a model $M$ that, given the prompt $b$, returns an image $i \sim I_\ell$ with probability $p_\ell$, where $\sum_{\ell \in [k]} p_\ell = 1$. If the auditing function $s$ is $(\cC, \alpha)$-multiaccurate, then $\left|\E_{i \sim M(b)}[s(b, i)]-\bar{s}\right| \leq \alpha$ irrespective of the probabilities $p_1, ..., p_k$. 
\end{theorem}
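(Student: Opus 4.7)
The plan is to decompose the expectation over $M(b)$ by conditioning on which subset $I_\ell$ the image was drawn from, then apply the multiaccuracy guarantee to bound the per-subset deviation from $\bar{s}$, and finally observe that a convex combination of quantities lying in an $\alpha$-ball around $\bar{s}$ also lies in that ball.

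Concretely, I would first use the law of total expectation together with the definition of $M$:
\begin{equation*}
\E_{i \sim M(b)}[s(b,i)] = \sum_{\ell=1}^{k} p_\ell \, \E_{i \sim I_\ell}[s(b,i)].
\end{equation*}
Next, I would invoke $(\cC,\alpha)$-multiaccuracy on each $I_\ell \in \cC$ to get $\bigl|\E_{i \sim I_\ell}[s^*(b,i) - s(b,i)]\bigr| \leq \alpha$, and then substitute the hypothesis $\E_{i \sim I_\ell}[s^*(b,i)] = \bar{s}$ to conclude that $\E_{i \sim I_\ell}[s(b,i)] \in [\bar{s} - \alpha, \bar{s} + \alpha]$ for every $\ell \in [k]$.

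Finally, since each per-subset expectation lies in this common interval, their convex combination with weights $p_\ell \geq 0$ summing to $1$ also lies in $[\bar{s} - \alpha, \bar{s} + \alpha]$, independently of the specific values of $p_1,\ldots,p_k$. Rearranging yields $\bigl|\E_{i \sim M(b)}[s(b,i)] - \bar{s}\bigr| \leq \alpha$. Formally, one can chain the triangle inequality:
\begin{equation*}
\Bigl|\sum_{\ell} p_\ell \E_{i \sim I_\ell}[s(b,i)] - \bar{s}\Bigr| = \Bigl|\sum_{\ell} p_\ell \bigl(\E_{i \sim I_\ell}[s(b,i)] - \bar{s}\bigr)\Bigr| \leq \sum_{\ell} p_\ell \cdot \alpha = \alpha.
\end{equation*}

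There is no real obstacle here: the result is essentially a direct unpacking of the definition of multiaccuracy combined with the equal-expectation hypothesis. The only thing to be slightly careful about is making sure the sampling model $M(b)$ is interpreted as first drawing $\ell$ with probability $p_\ell$ and then drawing $i \sim I_\ell$, which licenses the law-of-total-expectation step. The $p_\ell$-independence of the bound is precisely what makes multiaccuracy the right fairness notion: any reweighting over the protected subsets is absorbed into the convex combination.
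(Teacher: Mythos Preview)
Your proposal is correct and follows essentially the same approach as the paper: both decompose $\E_{i\sim M(b)}[s(b,i)]$ via the law of total expectation, apply the multiaccuracy bound on each $I_\ell$, substitute $\E_{i\sim I_\ell}[s^*(b,i)]=\bar{s}$, and then combine using the convex weights $p_\ell$. The only cosmetic difference is that the paper writes the upper and lower bounds separately while you wrap both into a single triangle-inequality line.
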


\begin{proof}
    We see that
    \begin{align*}
        \E_{i \sim M(b)}[s(b, i)] &= \sum_{\ell \in [k]} \E_{i \sim I_\ell}[s(b, i)] \cdot p_\ell\\
        &\leq \sum_{\ell \in [k]} \left(\E_{i \sim I_\ell}[s^*(b, i)] + \alpha\right) \cdot p_\ell\\
        &= \sum_{\ell \in [k]} (\bar{s}+\alpha) \cdot p_\ell\\
        &= \bar{s}+\alpha.
    \end{align*}
    By the same logic, we also see that $\E_{i \sim M(b)}[s(b, i)] \geq \bar{s}-\alpha$. Thus, $-\alpha \leq \E_{i \sim M(b)}[s(b, i)]-s \leq \alpha$, so $\left|\E_{i \sim M(b)}[s(b, i)]-\bar{s}\right| \leq \alpha$. 
\end{proof}

As a corollary of this theorem, we see that irrespective of the probabilities $p_1, ..., p_n$, the difference in the alignment score of two models is at most $2\alpha$. Thus, going back to our example of male and female doctors, we see that as long as we expect the average alignment of an image of a male doctor and a female doctor with the prompt ``doctor" to be the same, if our auditing function is multiaccurate on the appropriate subsets of images, models will get similar alignment scores irrespective of what proportion of male or female doctors they generate.

Finally, it is worth noting that checking whether an auditing function is multiaccurate may be challenging, since this requires access to the true alignment scores $s^*$. Note that if we had arbitrary oracle access to $s^*$ there would be no need to evaluate auditing functions $s$ at all, since we could just use $s^*$ to audit the alignment of text-to-image models. However, the advantage of our notion of fairness is that it only requires ``true" scores for a diverse but fixed set of images on which auditing functions can then be evaluated. Such datasets could be obtained by manual labeling, and there have been several efforts to create large image-caption datasets with alignment ratings \cite{levinboim-etal-2021-quality, lee-etal-2021-umic, vedantam2015cider, hodosh2013framing}.

\subsection{Properties of Fair Embeddings} \label{sec:fairembeddings}

In this section, we explore necessary conditions for multi-modal embeddings to satisfy multiaccuracy. Let us begin by defining an auditing function in terms of a multi-modal embedding space. For vectors $\bx$ and $\by$, we define their cosine similarity as
$\cos(\bx, \by) = \frac{\bx \cdot \by}{\|\bx\|\|\by\|}.$
Given a multimodal embedding $(e_\cI, e_\cP)$, we define the auditing function
$s(b, i) = \frac{\cos\bigl(e_\cP(b), e_\cI(i)\bigr)+1}{2}.$ This ensures that $s(b, i) \in [0, 1]$. Note that existing techniques like CLIPScore \cite{hessel2022clipscore} achieve a similar effect by clipping the cosine similarity at a minimum of 0, but this is equivalent to our definition up to a factor of 2 as long as similarity is positive. 

Given the definitions of fairness for alignment auditors in \Cref{sec:fairness}, it is natural to ask what properties we would expect from a multimodal embedding space in order to obtain a fair auditor. We start with the following observation, which provides a necessary condition for multiaccuracy based on the average score between a prompt and different subsets of images. 

\begin{restatable}{theorem}{textimage}\label{thm:text-image}
    If $s$ is $(\cC, \alpha)$-multiaccurate for a prompt $b$, for all $I_\ell, I_{\ell'} \in \cC$, if $\E_{i \in I_\ell} [s^*(b, i)] = \E_{i \in I_{\ell'}} [s^*(b, i)]$, then $|\E_{i \in I_\ell} [\cos(e_\cP(b), e_\cI(i))] - \E_{i \in I_{\ell'}} [\cos(e_\cP(b), e_\cI(i))]| \leq 4\alpha$. 
\end{restatable}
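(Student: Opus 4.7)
The plan is to reduce everything to a linear relationship between the auditor $s$ and the cosine similarity, then invoke multiaccuracy twice along with the assumed equality of expected true scores. The key observation is that the definition $s(b,i) = \frac{\cos(e_\cP(b), e_\cI(i))+1}{2}$ is affine, which means $\cos(e_\cP(b), e_\cI(i)) = 2\, s(b,i) - 1$, and so expectations of cosine similarity can be written cleanly as $\E_{i \in I}[\cos(e_\cP(b), e_\cI(i))] = 2\,\E_{i \in I}[s(b,i)] - 1$ for any subset $I$.

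First I would apply this identity to both $I_\ell$ and $I_{\ell'}$, giving
\[
\bigl|\E_{i \in I_\ell}[\cos(e_\cP(b),e_\cI(i))] - \E_{i \in I_{\ell'}}[\cos(e_\cP(b),e_\cI(i))]\bigr| = 2\bigl|\E_{i \in I_\ell}[s(b,i)] - \E_{i \in I_{\ell'}}[s(b,i)]\bigr|,
\]
so the $-1$ shifts cancel. Next I would rewrite the right-hand difference by adding and subtracting the true-score expectations:
\[
\E_{I_\ell}[s] - \E_{I_{\ell'}}[s] = \bigl(\E_{I_\ell}[s] - \E_{I_\ell}[s^*]\bigr) - \bigl(\E_{I_{\ell'}}[s] - \E_{I_{\ell'}}[s^*]\bigr) + \bigl(\E_{I_\ell}[s^*] - \E_{I_{\ell'}}[s^*]\bigr),
\]
where I am suppressing the $(b,\cdot)$ arguments for clarity.

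Then I would bound the three pieces individually. The final term vanishes by the hypothesis $\E_{i \in I_\ell}[s^*(b,i)] = \E_{i \in I_{\ell'}}[s^*(b,i)]$. The first two terms are exactly the quantities controlled by $(\cC,\alpha)$-multiaccuracy applied to $I_\ell$ and $I_{\ell'}$, each bounded in absolute value by $\alpha$. Applying the triangle inequality gives $|\E_{I_\ell}[s] - \E_{I_{\ell'}}[s]| \leq 2\alpha$, and multiplying by the factor of $2$ from the first step yields the claimed bound of $4\alpha$.

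There is no genuine obstacle here: the result is essentially an unraveling of definitions together with one application of the triangle inequality. The only mild subtlety is remembering that the rescaling $s = \tfrac{1}{2}(\cos + 1)$ has slope $\tfrac{1}{2}$, which is where the factor of $4$ (rather than $2$) in the conclusion comes from. Because the argument never uses any structure of the embedding beyond this affine relation, the same proof would extend verbatim to any auditor that is an affine function of cosine similarity.
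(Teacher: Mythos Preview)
Your proposal is correct and essentially identical to the paper's own proof: both use multiaccuracy on $I_\ell$ and $I_{\ell'}$ together with the equality of expected true scores and the triangle inequality to get $|\E_{I_\ell}[s] - \E_{I_{\ell'}}[s]| \leq 2\alpha$, then the affine relation $\cos = 2s - 1$ to convert this into the $4\alpha$ bound on cosine similarities. The only cosmetic difference is the order of presentation---the paper first bounds the $s$-difference and then substitutes the definition of $s$, whereas you first invert the definition and then bound---but the logical content is the same.
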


\begin{proof}
    Since $s$ is $(\cC, \alpha)$-multiaccurate for prompt $b$, for all $I \in \cC$ we know that 
    $$\left|\E_{i \sim I} [s^*(b, i)-s(b, i)]\right| = \left|\E_{i \sim I} [s^*(b, i)]-\E_{i \sim I}[s(b, i)]\right| \leq \alpha.$$
    Moreover, for some $I_{\ell}, I_{\ell'} \in \cC$, if $\E_{i \in I_{\ell}} [s^*(b, i)] = \E_{i \in I_{\ell'}} [s^*(b, i)]$, by the triangle inequality we see that 
    $$\left|\E_{i \sim I_\ell} [s(b, i)]-\E_{i \sim I_{\ell'}}[s(b, i)]\right| \leq 2\alpha.$$
    Substituting the definition of $s(b, i)$, we see that 
    $$\left|\E_{i \sim I_{\ell}} [\cos(e_\cP(b), e_\cI(i))]-\E_{i \sim I_{\ell'}}[\cos(e_\cP(b), e_\cI(i))]\right| \leq 4\alpha.$$
\end{proof}

This implies that given a prompt $b$ and subsets of images with similar average true alignment scores, if the embeddings of those images do not have a similar distance (on average) to the embedding of $b$ then our auditor is not multiaccurate on those images. 

Next, we use this observation to detect bias based only on the prompt embeddings $e_\cP$. As discussed in \Cref{sec:bias_embed_biased_gen}, prompt embeddings are often biased in themselves, which causes bias in diffusion models when these embeddings are used as inputs. By showing that fair prompt embeddings are necessary for multiaccuracy, the theorem below demonstrates that bias in prompt embeddings also translates to bias in multimodal embeddings, and therefore results in biased alignment auditing.

\begin{restatable}{theorem}{texttext}
    Consider a prompt $b$ and attributes $\cA = \{a_1, ..., a_k\}$. Let $I_\ell$ be a set of images such that for every $i \in I_\ell$, $e_\cI(i)$ is in a ball of radius $\varepsilon$ around $e_\cP(a_\ell + b)$. For $\cC = \{I_1, ..., I_k\}$, if $s$ is $(\cC, \alpha)$-multiaccurate for prompt $b$, for any $\ell, \ell' \in [k]$ such that $\E_{i \in I_\ell} [s^*(b, i)] = \E_{i \in I_{\ell'}} [s^*(b, i)]$, it holds that $|\cos(e_\cP(b), e_\cP(a_\ell + b)) - \cos(e_\cP(b), e_\cP(a_{\ell'} + b))| \leq 4\alpha+2\varepsilon$.  
\end{restatable}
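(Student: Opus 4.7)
The plan is to reduce this statement to the previous \texttt{textimage} theorem plus a small perturbation argument that exploits the fact that images $i \in I_\ell$ have embeddings $\varepsilon$-close to $e_\cP(a_\ell+b)$. Since the previous theorem already gives me, under the same hypothesis $\E_{i \in I_\ell}[s^*(b,i)] = \E_{i \in I_{\ell'}}[s^*(b,i)]$, the bound
\[
\left|\E_{i \in I_\ell}[\cos(e_\cP(b), e_\cI(i))] - \E_{i \in I_{\ell'}}[\cos(e_\cP(b), e_\cI(i))]\right| \leq 4\alpha,
\]
all I need is to transfer each expected cosine to the cosine between prompt embeddings, paying an additional $\varepsilon$ per term.

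The key perturbation lemma I will use is the following observation: all embeddings live on $\mathcal{S}^{n-1}$, so cosine similarity coincides with the inner product. Hence for any unit vectors $w, u, v$,
\[
|\cos(w, u) - \cos(w, v)| = |\langle w, u - v\rangle| \leq \|u - v\|
\]
by Cauchy--Schwarz. Applied with $w = e_\cP(b)$, $u = e_\cI(i)$, and $v = e_\cP(a_\ell + b)$, the assumption $\|e_\cI(i) - e_\cP(a_\ell+b)\| \leq \varepsilon$ yields, for every $i \in I_\ell$,
\[
\bigl|\cos(e_\cP(b), e_\cI(i)) - \cos(e_\cP(b), e_\cP(a_\ell+b))\bigr| \leq \varepsilon,
\]
and by Jensen (or linearity of expectation plus the triangle inequality),
\[
\bigl|\E_{i \in I_\ell}[\cos(e_\cP(b), e_\cI(i))] - \cos(e_\cP(b), e_\cP(a_\ell+b))\bigr| \leq \varepsilon.
\]
The same bound holds for $\ell'$.

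Chaining these three inequalities with the triangle inequality gives exactly the desired $4\alpha + 2\varepsilon$. I do not expect any real obstacle: the previous theorem does the heavy lifting of exchanging true scores $s^*$ for cosine similarities, and the only new ingredient is observing that unit-norm embeddings make cosine similarity $1$-Lipschitz in each argument, so an $\varepsilon$-ball around $e_\cP(a_\ell+b)$ in image-embedding space maps to an $\varepsilon$-interval in cosine-similarity space. The only mild subtlety to flag is that the previous theorem's statement is about $|\E s^* - \E s|$, so one should be explicit that the $\varepsilon$-ball assumption is on $e_\cI(i)$ rather than on $s^*(b,i)$ directly; the Cauchy--Schwarz step is what bridges this gap cleanly.
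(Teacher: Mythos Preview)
The proposal is correct and follows essentially the same approach as the paper: invoke the previous theorem for the $4\alpha$ term, then use the unit-sphere identity $\cos(w,u)=\langle w,u\rangle$ together with Cauchy--Schwarz to bound each $|\cos(e_\cP(b),e_\cI(i)) - \cos(e_\cP(b),e_\cP(a_\ell+b))|$ by $\varepsilon$, and finish with the triangle inequality. Your framing of this perturbation step as a $1$-Lipschitz property is slightly cleaner than the paper's explicit $\delta$-expansion, but the mathematical content is identical.
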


\begin{proof}
    Since $s$ is $(\cC, \alpha)$-multiaccurate for prompt $b$, if $\E_{i \in I_\ell} [s^*(b, i)] = \E_{i \in I_{\ell'}} [s^*(b, i)]$, we know by \Cref{thm:text-image} that 
    $$\left|\E_{i \sim I_{\ell}} [\cos(e_\cP(b), e_\cI(i))]-\E_{i \sim I_{\ell'}}[\cos(e_\cP(b), e_\cI(i))]\right| \leq 4\alpha.$$

    Next, consider the expression $\cos(e_\cP(b), e_\cI(i))$. For all $i \in I_\ell$, since $e_\cI(i)$ is in a ball of radius $\varepsilon$ around $e_\cP(a_\ell + b)$, we know that there exists a vector $\delta$ such that $e_\cI(i) = e_\cP(a_\ell + b) + \delta $ and $\|\delta\| \leq \varepsilon$. Thus, we see that
    \begin{align*}
        \cos(e_\cP(b), e_\cI(i)) &= \cos(e_\cP(b), e_\cP(a_\ell + b) + \delta)\\
        &= \frac{e_\cP(b) \cdot (e_\cP(a_\ell + b) + \delta)} {\|e_\cP(b)\|\|e_\cP(a_\ell + b) + \delta\|}\\
        &= e_\cP(b) \cdot e_\cP(a_\ell + b) + e_\cP(b) \cdot \delta\\
        &= \cos(e_\cP(b), e_\cP(a_\ell + b)) + e_\cP(b) \cdot \delta.
    \end{align*}
    Since $\|\delta\| \leq \varepsilon$, we know that $|e_\cP(b) \cdot \delta| \leq \varepsilon$. Thus, 
    $$|\cos(e_\cP(b), e_\cI(i)) - \cos(e_\cP(b), e_\cP(a_\ell + b))| \leq \varepsilon,$$
    for all $i \in I_\ell$, which implies that
    $$\left|\E_{i \sim I_{\ell}} [\cos(e_\cP(b), e_\cI(i))] - \cos(e_\cP(b), e_\cP(a_\ell + b))\right| \leq \varepsilon.$$
    By symmetry, note that the same expression holds for $i \sim I_{\ell'}$ and $a_{\ell'}$. Thus, by the triangle inequality, 
    $$|\cos(e_\cP(b), e_\cP(a_\ell + b))-\cos(e_\cP(b), e_\cP(a_{\ell'} + b))| \leq 4 \alpha + 2 \varepsilon.$$
\end{proof}

The contrapositive of this theorem implies that, for a prompt $b$, if there are two attributes $a_1$ and $a_2$ such that the embeddings of $a_1+b$ and $a_2+ b$ are not equidistant from the embedding of $b$, then a multiaccurate auditing function $s$ derived from this embedding can not be fair on images that closely match attributes $a_1$ and $a_2$. 

Thus, if the underlying embedding space is not fair on prompts, we should not expect it to be a fair auditor. This allows us to infer bias in a multimodal embedding space by only looking at its prompt embedding function $e_\cP$.

\subsection{Auditing Alignment with Biased Embeddings}\label{sec:biased-audit}

While there have been some efforts to debias CLIP and CLIPScore \cite{dehdashtian2024fairerclip}, it is likely that many of the approaches we use to audit prompt-image alignment will continue to exhibit some form of representational bias. Thus, it is important to consider methods to mitigate this bias in the score calculation phase. Currently, the standard method for calculating the alignment score of a model $M$ for a prompt $b$ using a multimodal embedding is to take the average of the auditing function described in \Cref{sec:fairembeddings} over several images samples from $M(b)$ \cite{lee2023holistic}. Henceforth, we will refer to this method as \textit{score-then-average}. As we have discussed, one of the ways in which an embedding space may be biased is if for a prompt $b$, vectors corresponding to images with attribute $a_1$ are more similar to the vector for $b$ on average than those with attribute $a_2$. In this case, a model that generates only images with attribute $a_1$ will consistently get a higher alignment score than a model that generates a balanced distribution of images. In this section, we suggest an alternative method of evaluating alignment based on potentially biased multimodal embeddings that helps alleviate this form of bias. Intuitively, this method measures the maximum similarity score between an image and the base prompt with different protected attributes attached. We call this method \textit{subclass-score}. 

In this approach, given a prompt $b$, we begin with a list of attributes $a_1, ..., a_k$ on which we would like to be fair. To calculate the alignment score of a model $M$ of this prompt, we start by sampling a set $I$ of images from $M(b)$. We then calculate the individual scores of images $i \in I$ as
$$s(b, i) = \max_{\ell \in [k]}(s(a_k + b, i)).$$ For example, if $a_1 = \text{`male'}$ and $b = \text{`doctor'}$,  $a_1 + b = \text{`male doctor'}$. Finally, to calculate the aggregate score we take the average of $s(b, i)$ over images $i \in I$. 

This aims to tackle the issue where embeddings of images with one attribute may be closer on average to the embedding of the prompt than images with other attributes. By measuring the similarity with attribute-specific vectors instead, the relative distance from the embedding of the original prompt is no longer an issue. We note that there remain drawbacks with this approach. First, it may be difficult to identify what attributes are appropriate for which prompts. For instance, Google's Gemini model recently attempted to impose diversity into prompts where it was not historically accurate or appropriate \cite{gautam2024melting}. Also, this does not guarantee fairness conditions like multiaccuracy. For example, it is possible that images with some attribute $a_\ell$ are closer on average to the prompt $a_\ell + b$ than another attribute $a_{\ell'}$ are to the prompt $a_{\ell'} + b$.

We introduce a second method for generating unbiased alignment scores, \textit{average-then-score}, in Appendix~\ref{app:avgscore}. This method involves taking the mean of the image embeddings before computing the cosine similarity to the prompt embedding. We evaluate and compare these approaches in \Cref{sec:clip}.

\subsection{Empirical Case-Study: CLIP}\label{sec:clip}

In this section, we investigate biases present in an existing method of alignment auditing called CLIPScore \cite{hessel2022clipscore} based on the multimodal embedding model CLIP \cite{radford2021learning}. We then evaluate the techniques proposed in \Cref{sec:biased-audit}. We find that the text embeddings of CLIP demonstrate gender bias in the representation of occupations, matching the findings of \cite{bolukbasi2016man}. We also find that a similar bias is present in the relationship between text embeddings of occupations and gendered image embeddings. In particular, we see that for the same set of images, male medical professionals got a 0.02 higher score than female medical professionals on average for the prompt ``doctor," and a 0.061 lower score for the prompt ``nurse." This clearly demonstrates an underlying bias in the auditing CLIPScore function; models that generate only male doctors and only female nurses get over $5\%$ higher alignment scores than models with balanced distributions. Finally, we evaluate our bias mitigation methods in \Cref{sec:biased-audit} and find that subclass-score performs significantly better than average-then-score, staying roughly consistent across gender ratio. Thus, subclass-score is the a promising step for alleviating gender bias in alignment scores for text-to-image models. Further details can be found in Appendix~\ref{sec:mitigating-bias}.

\section{Conclusion and Future Work}
In this paper we study the relationship between bias in embeddings and image generations for text-to-image models. We propose new statistical fairness criteria for generative models, and we prove theoretically and empirically that biased prompt embeddings lead to representationally unfair outputs. This establishes a \emph{necessary} condition (an unbiased embedding) for an unbiased model. We then investigate the impact of biased prompt embeddings on measuring the alignment between image generations and prompts, as well as multi-accuracy and multi-calibration style definitions for fair alignment auditing algorithms.

There are several interesting directions to take this work. First, the text embedding's impact on fairness in diffusion models can be further investigated empirically with recently-released open state-of-the-art models that have public training sets \cite{gokaslan2023commoncanvas}. Next, we are interested in potential augmentations for training or evaluating diffusion models that would compensate for bias in embedding spaces. Finally, while we give several conditions that imply a metric-based auditing function is not fair, it would be interesting to to explore \emph{sufficient} conditions for fairness, like in \cite{dwork2011fairness}. 

Throughout our manuscript, we have worked around the question of true labels (e.g. race/gender) of artificial generations. In the traditional algorithmic fairness setting of supervised learning, we are satisfied with group algorithmic fairness guarantees because the underlying subjects self-identify into specific groups. Generated images have no such ground truth, which is why we proposed group fairness criteria in Section~\ref{sec:bias_embed_biased_gen} that bypass these labels. \citet{luccioni2023stable} proposes using Visual Question Answering model (VQA) for this purpose, but this introduces another source of bias with the VQA model. We hope that future literature provides a sound resolution to this epistemic question underlying any fairness auditing algorithm for text-to-image models. 

\section*{Acknowledgements}
The authors would like to thank Cynthia Dwork for teaching a class which inspired this project, many thoughtful discussions on fairness in generative models, and her abundant questions/suggestions which have greatly improved the clarity of this manuscript. Additionally, the authors would like to thank the ICML Workshop on Trustworthy Multimodal Foundation Models and Agents (TiFA) for featuring an initial version of this work, and comments from its poster/lightning talk. ML would also like to thank Sitan Chen for introducing him to the theory of diffusion models.

\section*{Social Impact Statement}

As diffusion models become more broadly used in diverse contexts and applications, it is important that their outputs follow various fairness desiderata. This paper studies several of these desiderata, and provides new methodologies and theoretical frameworks for auditing fairness in diffusion models. We also conduct extensive experiments on real diffusion models. We look forward to future research that will further investigate the questions we raise and consider and provide inclusive solutions.

\bibliography{refs}

\begin{thebibliography}{44}
\providecommand{\natexlab}[1]{#1}
\providecommand{\url}[1]{\texttt{#1}}
\expandafter\ifx\csname urlstyle\endcsname\relax
  \providecommand{\doi}[1]{doi: #1}\else
  \providecommand{\doi}{doi: \begingroup \urlstyle{rm}\Url}\fi

\bibitem[Bakr et~al.(2023)Bakr, Sun, Shen, Khan, Li, and Elhoseiny]{bakr2023hrsbench}
Bakr, E.~M., Sun, P., Shen, X., Khan, F.~F., Li, L.~E., and Elhoseiny, M.
\newblock Hrs-bench: Holistic, reliable and scalable benchmark for text-to-image models, 2023.

\bibitem[Berg et~al.(2022)Berg, Hall, Bhalgat, Yang, Kirk, Shtedritski, and Bain]{berg2022prompt}
Berg, H., Hall, S.~M., Bhalgat, Y., Yang, W., Kirk, H.~R., Shtedritski, A., and Bain, M.
\newblock A prompt array keeps the bias away: Debiasing vision-language models with adversarial learning, 2022.

\bibitem[Bolukbasi et~al.(2016)Bolukbasi, Chang, Zou, Saligrama, and Kalai]{bolukbasi2016man}
Bolukbasi, T., Chang, K.-W., Zou, J., Saligrama, V., and Kalai, A.
\newblock Man is to computer programmer as woman is to homemaker? debiasing word embeddings, 2016.

\bibitem[Chen et~al.(2024)Chen, Liu, Yi, Xu, Lai, Wang, Ho, and Xu]{chen2024evaluating}
Chen, M., Liu, Y., Yi, J., Xu, C., Lai, Q., Wang, H., Ho, T.-Y., and Xu, Q.
\newblock Evaluating text-to-image generative models: An empirical study on human image synthesis, 2024.

\bibitem[Chen et~al.(2023)Chen, Chewi, Li, Li, Salim, and Zhang]{DBLP:conf/iclr/ChenC0LSZ23}
Chen, S., Chewi, S., Li, J., Li, Y., Salim, A., and Zhang, A.
\newblock Sampling is as easy as learning the score: theory for diffusion models with minimal data assumptions.
\newblock In \emph{The Eleventh International Conference on Learning Representations, {ICLR} 2023, Kigali, Rwanda, May 1-5, 2023}. OpenReview.net, 2023.
\newblock URL \url{https://openreview.net/pdf?id=zyLVMgsZ0U\_}.

\bibitem[Choi et~al.(2024)Choi, Park, Kim, Lee, and Park]{choi2024fair}
Choi, Y., Park, J., Kim, H., Lee, J., and Park, S.
\newblock Fair sampling in diffusion models through switching mechanism, 2024.

\bibitem[Chuang et~al.(2023)Chuang, Jampani, Li, Torralba, and Jegelka]{chuang2023debiasing}
Chuang, C.-Y., Jampani, V., Li, Y., Torralba, A., and Jegelka, S.
\newblock Debiasing vision-language models via biased prompts, 2023.

\bibitem[Dehdashtian et~al.(2024)Dehdashtian, Wang, and Boddeti]{dehdashtian2024fairerclip}
Dehdashtian, S., Wang, L., and Boddeti, V.~N.
\newblock Fairerclip: Debiasing clip's zero-shot predictions using functions in rkhss, 2024.

\bibitem[Dev \& Phillips(2019)Dev and Phillips]{dev2019attenuating}
Dev, S. and Phillips, J.
\newblock Attenuating bias in word vectors, 2019.

\bibitem[Dwork et~al.(2011)Dwork, Hardt, Pitassi, Reingold, and Zemel]{dwork2011fairness}
Dwork, C., Hardt, M., Pitassi, T., Reingold, O., and Zemel, R.
\newblock Fairness through awareness, 2011.

\bibitem[Feldman et~al.(2015)Feldman, Friedler, Moeller, Scheidegger, and Venkatasubramanian]{disparateimpact}
Feldman, M., Friedler, S.~A., Moeller, J., Scheidegger, C., and Venkatasubramanian, S.
\newblock Certifying and removing disparate impact.
\newblock In \emph{Proceedings of the 21th ACM SIGKDD International Conference on Knowledge Discovery and Data Mining}, KDD '15, pp.\  259–268, New York, NY, USA, 2015. Association for Computing Machinery.
\newblock ISBN 9781450336642.
\newblock \doi{10.1145/2783258.2783311}.
\newblock URL \url{https://doi.org/10.1145/2783258.2783311}.

\bibitem[Friedrich et~al.(2023)Friedrich, Brack, Struppek, Hintersdorf, Schramowski, Luccioni, and Kersting]{friedrich2023fair}
Friedrich, F., Brack, M., Struppek, L., Hintersdorf, D., Schramowski, P., Luccioni, S., and Kersting, K.
\newblock Fair diffusion: Instructing text-to-image generation models on fairness, 2023.

\bibitem[Gautam et~al.(2024)Gautam, Venkit, and Ghosh]{gautam2024melting}
Gautam, S., Venkit, P.~N., and Ghosh, S.
\newblock From melting pots to misrepresentations: Exploring harms in generative ai, 2024.

\bibitem[Gokaslan et~al.(2023)Gokaslan, Cooper, Collins, Seguin, Jacobson, Patel, Frankle, Stephenson, and Kuleshov]{gokaslan2023commoncanvas}
Gokaslan, A., Cooper, A.~F., Collins, J., Seguin, L., Jacobson, A., Patel, M., Frankle, J., Stephenson, C., and Kuleshov, V.
\newblock Commoncanvas: An open diffusion model trained with creative-commons images.
\newblock \emph{arXiv preprint arXiv:2310.16825}, 2023.

\bibitem[Hebert-Johnson et~al.(2018)Hebert-Johnson, Kim, Reingold, and Rothblum]{pmlr-v80-hebert-johnson18a}
Hebert-Johnson, U., Kim, M., Reingold, O., and Rothblum, G.
\newblock Multicalibration: Calibration for the ({C}omputationally-identifiable) masses.
\newblock In Dy, J. and Krause, A. (eds.), \emph{Proceedings of the 35th International Conference on Machine Learning}, volume~80 of \emph{Proceedings of Machine Learning Research}, pp.\  1939--1948. PMLR, 10--15 Jul 2018.
\newblock URL \url{https://proceedings.mlr.press/v80/hebert-johnson18a.html}.

\bibitem[Hessel et~al.(2022)Hessel, Holtzman, Forbes, Bras, and Choi]{hessel2022clipscore}
Hessel, J., Holtzman, A., Forbes, M., Bras, R.~L., and Choi, Y.
\newblock Clipscore: A reference-free evaluation metric for image captioning, 2022.

\bibitem[Ho et~al.(2020)Ho, Jain, and Abbeel]{ho2020denoising}
Ho, J., Jain, A., and Abbeel, P.
\newblock Denoising diffusion probabilistic models.
\newblock \emph{Advances in Neural Information Processing Systems}, 33:\penalty0 6840--6851, 2020.
\newblock URL \url{https://arxiv.org/abs/2006.11239}.

\bibitem[Hodosh et~al.(2013)Hodosh, Young, and Hockenmaier]{hodosh2013framing}
Hodosh, M., Young, P., and Hockenmaier, J.
\newblock Framing image description as a ranking task: Data, models and evaluation metrics.
\newblock \emph{Journal of Artificial Intelligence Research}, 47:\penalty0 853--899, 2013.

\bibitem[Hu et~al.(2023)Hu, Liu, Kasai, Wang, Ostendorf, Krishna, and Smith]{hu2023tifa}
Hu, Y., Liu, B., Kasai, J., Wang, Y., Ostendorf, M., Krishna, R., and Smith, N.~A.
\newblock Tifa: Accurate and interpretable text-to-image faithfulness evaluation with question answering, 2023.

\bibitem[Kearns et~al.(2018)Kearns, Neel, Roth, and Wu]{kearns2018preventing}
Kearns, M., Neel, S., Roth, A., and Wu, Z.~S.
\newblock Preventing fairness gerrymandering: Auditing and learning for subgroup fairness.
\newblock In \emph{International conference on machine learning}, pp.\  2564--2572. PMLR, 2018.

\bibitem[Lee et~al.(2021)Lee, Yoon, Dernoncourt, Bui, and Jung]{lee-etal-2021-umic}
Lee, H., Yoon, S., Dernoncourt, F., Bui, T., and Jung, K.
\newblock {UMIC}: An unreferenced metric for image captioning via contrastive learning.
\newblock In Zong, C., Xia, F., Li, W., and Navigli, R. (eds.), \emph{Proceedings of the 59th Annual Meeting of the Association for Computational Linguistics and the 11th International Joint Conference on Natural Language Processing (Volume 2: Short Papers)}, pp.\  220--226, Online, August 2021. Association for Computational Linguistics.
\newblock \doi{10.18653/v1/2021.acl-short.29}.
\newblock URL \url{https://aclanthology.org/2021.acl-short.29}.

\bibitem[Lee et~al.(2023)Lee, Yasunaga, Meng, Mai, Park, Gupta, Zhang, Narayanan, Teufel, Bellagente, Kang, Park, Leskovec, Zhu, Fei-Fei, Wu, Ermon, and Liang]{lee2023holistic}
Lee, T., Yasunaga, M., Meng, C., Mai, Y., Park, J.~S., Gupta, A., Zhang, Y., Narayanan, D., Teufel, H.~B., Bellagente, M., Kang, M., Park, T., Leskovec, J., Zhu, J.-Y., Fei-Fei, L., Wu, J., Ermon, S., and Liang, P.
\newblock Holistic evaluation of text-to-image models, 2023.

\bibitem[Levinboim et~al.(2021)Levinboim, Thapliyal, Sharma, and Soricut]{levinboim-etal-2021-quality}
Levinboim, T., Thapliyal, A.~V., Sharma, P., and Soricut, R.
\newblock Quality estimation for image captions based on large-scale human evaluations.
\newblock In Toutanova, K., Rumshisky, A., Zettlemoyer, L., Hakkani-Tur, D., Beltagy, I., Bethard, S., Cotterell, R., Chakraborty, T., and Zhou, Y. (eds.), \emph{Proceedings of the 2021 Conference of the North American Chapter of the Association for Computational Linguistics: Human Language Technologies}, pp.\  3157--3166, Online, June 2021. Association for Computational Linguistics.
\newblock \doi{10.18653/v1/2021.naacl-main.253}.
\newblock URL \url{https://aclanthology.org/2021.naacl-main.253}.

\bibitem[Li et~al.(2022)Li, Li, Xiong, and Hoi]{li2022blip}
Li, J., Li, D., Xiong, C., and Hoi, S.
\newblock Blip: Bootstrapping language-image pre-training for unified vision-language understanding and generation, 2022.

\bibitem[Li et~al.(2024)Li, Hu, Zhang, Zheng, Zhang, and Wang]{li2024fair}
Li, J., Hu, L., Zhang, J., Zheng, T., Zhang, H., and Wang, D.
\newblock Fair text-to-image diffusion via fair mapping, 2024.

\bibitem[Luccioni et~al.(2023)Luccioni, Akiki, Mitchell, and Jernite]{luccioni2023stable}
Luccioni, A.~S., Akiki, C., Mitchell, M., and Jernite, Y.
\newblock Stable bias: Analyzing societal representations in diffusion models, 2023.

\bibitem[Manzini et~al.(2019)Manzini, Yao~Chong, Black, and Tsvetkov]{manzini-etal-2019-black}
Manzini, T., Yao~Chong, L., Black, A.~W., and Tsvetkov, Y.
\newblock {B}lack is to criminal as {C}aucasian is to police: Detecting and removing multiclass bias in word embeddings.
\newblock In Burstein, J., Doran, C., and Solorio, T. (eds.), \emph{Proceedings of the 2019 Conference of the North {A}merican Chapter of the Association for Computational Linguistics: Human Language Technologies, Volume 1 (Long and Short Papers)}, pp.\  615--621, Minneapolis, Minnesota, June 2019. Association for Computational Linguistics.
\newblock \doi{10.18653/v1/N19-1062}.
\newblock URL \url{https://aclanthology.org/N19-1062}.

\bibitem[Nichol \& Dhariwal(2021)Nichol and Dhariwal]{nichol2021improved}
Nichol, A. and Dhariwal, P.
\newblock Improved denoising diffusion probabilistic models, 2021.

\bibitem[Papakyriakopoulos et~al.(2020)Papakyriakopoulos, Hegelich, Serrano, and Marco]{papakyriakopoulos2020bias}
Papakyriakopoulos, O., Hegelich, S., Serrano, J. C.~M., and Marco, F.
\newblock Bias in word embeddings.
\newblock In \emph{Proceedings of the 2020 conference on fairness, accountability, and transparency}, pp.\  446--457, 2020.

\bibitem[Perera \& Patel(2023)Perera and Patel]{perera2023analyzing}
Perera, M.~V. and Patel, V.~M.
\newblock Analyzing bias in diffusion-based face generation models, 2023.

\bibitem[Qiu et~al.(2023)Qiu, Dou, Wang, Celikyilmaz, and Peng]{qiu2023genderbiasesautomaticevaluation}
Qiu, H., Dou, Z.-Y., Wang, T., Celikyilmaz, A., and Peng, N.
\newblock Gender biases in automatic evaluation metrics for image captioning, 2023.
\newblock URL \url{https://arxiv.org/abs/2305.14711}.

\bibitem[Radford et~al.(2021)Radford, Kim, Hallacy, Ramesh, Goh, Agarwal, Sastry, Askell, Mishkin, Clark, Krueger, and Sutskever]{radford2021learning}
Radford, A., Kim, J.~W., Hallacy, C., Ramesh, A., Goh, G., Agarwal, S., Sastry, G., Askell, A., Mishkin, P., Clark, J., Krueger, G., and Sutskever, I.
\newblock Learning transferable visual models from natural language supervision, 2021.

\bibitem[Ramesh et~al.(2021)Ramesh, Pavlov, Goh, Gray, Voss, Radford, Chen, and Sutskever]{ramesh2021zeroshot}
Ramesh, A., Pavlov, M., Goh, G., Gray, S., Voss, C., Radford, A., Chen, M., and Sutskever, I.
\newblock Zero-shot text-to-image generation, 2021.

\bibitem[Rombach et~al.(2022)Rombach, Blattmann, Lorenz, Esser, and Ommer]{rombach2022highresolution}
Rombach, R., Blattmann, A., Lorenz, D., Esser, P., and Ommer, B.
\newblock High-resolution image synthesis with latent diffusion models, 2022.

\bibitem[Saharia et~al.(2022)Saharia, Chan, Saxena, Li, Whang, Denton, Ghasemipour, Ayan, Mahdavi, Lopes, Salimans, Ho, Fleet, and Norouzi]{saharia2022photorealistic}
Saharia, C., Chan, W., Saxena, S., Li, L., Whang, J., Denton, E., Ghasemipour, S. K.~S., Ayan, B.~K., Mahdavi, S.~S., Lopes, R.~G., Salimans, T., Ho, J., Fleet, D.~J., and Norouzi, M.
\newblock Photorealistic text-to-image diffusion models with deep language understanding, 2022.

\bibitem[Shen et~al.(2024)Shen, Du, Pang, Lin, Wong, and Kankanhalli]{shen2024finetuning}
Shen, X., Du, C., Pang, T., Lin, M., Wong, Y., and Kankanhalli, M.
\newblock Finetuning text-to-image diffusion models for fairness, 2024.

\bibitem[Smith \& Topin(2018)Smith and Topin]{smith2018superconvergence}
Smith, L.~N. and Topin, N.
\newblock Super-convergence: Very fast training of neural networks using large learning rates, 2018.

\bibitem[Song et~al.(2020)Song, Sohl-Dickstein, Kingma, Kumar, Ermon, and Poole]{song2020score}
Song, Y., Sohl-Dickstein, J., Kingma, D.~P., Kumar, A., Ermon, S., and Poole, B.
\newblock Score-based generative modeling through stochastic differential equations.
\newblock \emph{arXiv preprint arXiv:2011.13456}, 2020.

\bibitem[Vedantam et~al.(2015)Vedantam, Lawrence~Zitnick, and Parikh]{vedantam2015cider}
Vedantam, R., Lawrence~Zitnick, C., and Parikh, D.
\newblock Cider: Consensus-based image description evaluation.
\newblock In \emph{Proceedings of the IEEE conference on computer vision and pattern recognition}, pp.\  4566--4575, 2015.

\bibitem[Wan et~al.(2024)Wan, Subramonian, Ovalle, Lin, Suvarna, Chance, Bansal, Pattichis, and Chang]{wan2024survey}
Wan, Y., Subramonian, A., Ovalle, A., Lin, Z., Suvarna, A., Chance, C., Bansal, H., Pattichis, R., and Chang, K.-W.
\newblock Survey of bias in text-to-image generation: Definition, evaluation, and mitigation, 2024.

\bibitem[Wang et~al.(2021)Wang, Liu, and Wang]{wang2021genderneutral}
Wang, J., Liu, Y., and Wang, X.~E.
\newblock Are gender-neutral queries really gender-neutral? mitigating gender bias in image search, 2021.

\bibitem[Wang et~al.(2023)Wang, Liu, Di, Liu, and Wang]{wang2023t2iat}
Wang, J., Liu, X.~G., Di, Z., Liu, Y., and Wang, X.~E.
\newblock T2iat: Measuring valence and stereotypical biases in text-to-image generation, 2023.

\bibitem[Xu et~al.(2023)Xu, Liu, Wu, Tong, Li, Ding, Tang, and Dong]{xu2023imagereward}
Xu, J., Liu, X., Wu, Y., Tong, Y., Li, Q., Ding, M., Tang, J., and Dong, Y.
\newblock Imagereward: Learning and evaluating human preferences for text-to-image generation, 2023.

\bibitem[Yarom et~al.(2023)Yarom, Bitton, Changpinyo, Aharoni, Herzig, Lang, Ofek, and Szpektor]{yarom2023read}
Yarom, M., Bitton, Y., Changpinyo, S., Aharoni, R., Herzig, J., Lang, O., Ofek, E., and Szpektor, I.
\newblock What you see is what you read? improving text-image alignment evaluation, 2023.

\end{thebibliography}
\bibliographystyle{icml2024}

\newpage
\appendix
\onecolumn
\section{Technical Background on Diffusion Models}
\label{app:diffuse-math}

In this section, we review the basics of diffusion models and provide background for our theoretical results and experiments. 

At a high level, diffusion models seek to generate samples from a distribution given samples from that distribution; e.g. they learn to produce novel images of firefighters given a dataset of images of firefighters. They consist of a forward ``noising" process, which iteratively transforms the original distribution by adding small amounts of Gaussian noise to the sample, and a reverse process, which gradually transforms the pure noise distribution into the original distribution through a ``denoising" procedure. Intuitively, if one can (e.g. with a neural network) ``learn" the noise that is added to images in the initial phase, one should be able to recover an image by starting with pure Gaussian noise and subtracting the learned noise over time \cite{ho2020denoising}. In our experiments in this paper, we make a standard choice of denoising the images for $T=1000$ steps. The denoising function at time $t$ is closely related to the statistical \emph{score} $\nabla \ln q_t(x)$ of the noised distribution $q_t(x)$ \cite{song2020score}. Mathematically, the convergence to the true distribution of the sampling process with a sufficiently good approximation of the score can be proved with Girsanov's thoerem (see Section \ref{app:diffuse-math} in the Appendix); empirically, the process is implemented via a discrete-time approximation, where we iteratively denoise images over small time steps. In practice, diffusion models consist of the learned denoising function as well as an image encoder and decoder. The image decoder and encoder maps images to a latent space over which the diffusion process is applied. We also often want to guide the diffusion model towards a particular output; to do this, the diffusion model can be \emph{conditioned} to produce images consistent with a text prompt by allowing the denoiser to depend on a text embedding vector.

Here, we build up to Theorem \ref{thm:girsanov}, which we use as a black box in proving \ref{thm:lipschitz}. Consider a distribution $q$ over $\R^d$ with a smooth density function. Let $\mathcal{N}(0,\mathrm{Id})$ represent the multivariate normal distribution in $\R^d$. Different forward proceses of diffusion models are equivalent to the Ornstein-Uhlenbeck (OU) process up to a reparameterizing of time. The OU process takes samples from $q$ and transforms them into $\mathcal{N}(0,\mathrm{Id})$ by the following stochastic differential equation (SDE), 
\begin{equation}
    \D\forward_t = -\forward_t\,\D t + \sqrt{2}\D B_t\,, \qquad \forward_0 \sim q\,, \label{eq:forward}
\end{equation}
where the original sample $\forward_0$ is taken from $q$, $(\forward_t)_{t\ge 0}$ are the time-indexed random variables produced by the SDE after $t$ time, and $(B_t)_{t\ge 0}$ is the standard Brownian motion. We also define $q_t$ to be the distribution of $\forward_t$. Intuitively, the OU process dilutes the signal from the original sample $\forward_0$ with the $-\forward_t\,\D t$ term and gradually replaces it with the randomness in $\sqrt{2}\D B_t$. By the forward convergence of the OU process, the $\KL$ divergence between $q_t$ and $\mathcal{N}(0,\mathrm{Id})$ decays to $0$ exponentially quickly as $t \to \infty$. Now we consider the reverse process of the above SDE, in the sense that the marginal distributions $q_t$ are the same for both SDEs up to smooth test functions. We first fix a terminal time $T>0$ and consider the following SDE on the time interval $[0,T]$,
\begin{equation}
    \D\reverse_t = \{\reverse_t + 2\nabla\ln q_{T-t}(\reverse_t)\}\,\D t + \sqrt{2}\,\D B_t\,, \reverse_0 \sim q_T. \label{eq:reverse}
\end{equation}
where we start from samples $\reverse_0 \sim q_T$ drawn from the noised $q_T$ and $\nabla\ln q_{t}$ is the score function of the distribution $q_{t}$ at time $t$. We again take $(B_t)_{t\ge 0}$ to be the standard Brownian motion. It is well known that $q_{T-t}$ is the distribution of $\reverse_t$, which provides us with a general recipe for sampling from $q$: we start with samples $\reverse_0 \sim q_T$ and then apply the SDE in Eq.\ref{eq:reverse} to generate a sample from $q$. In practice, one does not have direct access to the score function $\nabla\ln q_{t}$ of the data distribution and must instead estimate it from training data. This can be estimated by appealing to Tweedie's formula, which relates the score to a denoising problem. We take samples $x \sim q$ and add noise $\eta \sim \mathcal{N}(0,\mathrm{Id})$ to form $e^{-t}x + \sqrt{1-e^{2t}}\eta$. We train a neural network $\mathrm{NN}_\theta(\cdot,t)$ to estimate $x$ from $e^{-t}x + \sqrt{1-e^{2t}}\eta$. By Tweedie's formula, the ``denoising" function which minimizes the mean squared error of $\mathbb{E}_{x \sim q, \eta\sim \mathcal{N}(0,\mathrm{id})}[\|x-\mathrm{NN}_\theta(\tilde{x},t)\|^2|\tilde{x}=e^{-t}x + \sqrt{1-e^{2t}}\eta]$ can be rearranged to yield the score up to known linear factors. 
\begin{lemma}[Tweedie's formula]
Given $\tilde{x}\sim x +e$ for $x \sim p$ and $e \sim \mathcal{N}(0,\sigma^2\mathrm{Id})$, the expectation of $x|\tilde{x}$ is 
\begin{align*}
\mathbb{E}[x|\tilde{x}]=\tilde{x}+\sigma^2 \nabla \ln \tilde{p}(\tilde{x}),
\end{align*}
where $\tilde{p}$ is the distribution of $\tilde{x}$. 
\end{lemma}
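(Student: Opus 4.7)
The plan is to give the standard short proof of Tweedie's formula by differentiating the marginal density $\tilde{p}$ under the integral and recognizing the resulting expression as a conditional expectation. Let $\phi_\sigma(z) = (2\pi\sigma^2)^{-d/2}\exp(-\|z\|^2/(2\sigma^2))$ denote the density of $\mathcal{N}(0,\sigma^2\mathrm{Id})$. Since $\tilde{x} = x + e$ with independent $x \sim p$ and $e \sim \mathcal{N}(0,\sigma^2\mathrm{Id})$, the marginal density of $\tilde{x}$ is the convolution
\[
\tilde{p}(\tilde{x}) = \int p(x)\,\phi_\sigma(\tilde{x}-x)\,\D x.
\]

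First I would compute $\nabla_{\tilde{x}} \tilde{p}(\tilde{x})$ by differentiating under the integral sign, using the identity
\[
\nabla_{\tilde{x}} \phi_\sigma(\tilde{x}-x) = -\frac{\tilde{x}-x}{\sigma^2}\,\phi_\sigma(\tilde{x}-x).
\]
Multiplying by $\sigma^2$ and rearranging gives
\[
\sigma^2 \nabla_{\tilde{x}} \tilde{p}(\tilde{x}) = \int (x-\tilde{x})\,p(x)\,\phi_\sigma(\tilde{x}-x)\,\D x = \int x\,p(x)\,\phi_\sigma(\tilde{x}-x)\,\D x - \tilde{x}\,\tilde{p}(\tilde{x}).
\]

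Next I would divide both sides by $\tilde{p}(\tilde{x})$ and recognize the first term on the right as $\mathbb{E}[x\mid \tilde{x}]$ by Bayes' rule, since the posterior density is $p(x\mid \tilde{x}) = p(x)\phi_\sigma(\tilde{x}-x)/\tilde{p}(\tilde{x})$. Using $\nabla_{\tilde{x}} \ln \tilde{p}(\tilde{x}) = \nabla_{\tilde{x}} \tilde{p}(\tilde{x})/\tilde{p}(\tilde{x})$, this yields
\[
\sigma^2 \nabla_{\tilde{x}} \ln \tilde{p}(\tilde{x}) = \mathbb{E}[x\mid \tilde{x}] - \tilde{x},
\]
which rearranges to the claimed formula.

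The proof is almost purely mechanical; the only thing to be careful about is justifying the interchange of gradient and integral, which is standard under mild regularity (e.g. $p$ has finite first moment and the Gaussian convolution makes $\tilde{p}$ smooth with exponentially decaying derivatives), so I would just note this briefly rather than grind through a dominated-convergence argument. There is no real obstacle in this proof; the content is essentially the observation that taking a log-derivative of a Gaussian convolution produces a posterior mean, which is exactly the score-to-denoiser dictionary used in the main text.
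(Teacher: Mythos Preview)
Your proof is correct and is the standard derivation of Tweedie's formula. The paper itself states the lemma as a classical result without providing a proof, so there is nothing to compare against; your argument fills that gap cleanly.
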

Until \cite{DBLP:conf/iclr/ChenC0LSZ23}, it was unclear whether this L2-approximation of the true score learned by the neural network could be used to faithfully sample from the base distribution. The below theorem, a simple consequence of Girsanov's theorem, gave an affirmative proof relating distribution learning to adequately solving the denoising problem up to average L2 error. 
\begin{theorem}[Section 5.2 of~\cite{DBLP:conf/iclr/ChenC0LSZ23}]\label{thm:girsanov}
    Let $(Y_t)_{t\in[0,T]}$ and $(Y'_t)_{t\in[0,T]}$ denote the solutions to
    \begin{align*}
        \D Y_t &= b_t(Y_t) \, \D t + \sqrt{2}\D B_t\,, \qquad Y_0 \sim q \\
        \D Y'_t &= b'_t(Y'_t)\, \D t + \sqrt{2}\D B_t\,, \qquad Y'_0 \sim q\,.
    \end{align*}
    Let $q$ and $q'$ denote the laws of $Y_T$ and $Y'_T$ respectively. If $b_t, b'_t$ satisfy that $\int^T_0 \mathbb{E}\,\norm{b_t(Y_t) - b'_t(Y_t)}^2\, \D t < \infty$, then $\KL(q\|q') \le \int^T_0 \mathbb{E}\,\norm{b_t(Y_t) - b'_t(Y_t)}^2\,\D t$.
\end{theorem}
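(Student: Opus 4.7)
The plan is to lift everything to path space and apply Girsanov's theorem. Let $\mathbb{P}$ and $\mathbb{P}'$ denote the laws of the trajectories $(Y_t)_{t\in[0,T]}$ and $(Y'_t)_{t\in[0,T]}$, respectively, viewed as probability measures on the path space $C([0,T],\R^d)$. Since $q$ and $q'$ are the pushforwards of $\mathbb{P}$ and $\mathbb{P}'$ under the time-$T$ evaluation map, the data processing inequality yields $\KL(q\|q') \le \KL(\mathbb{P}\|\mathbb{P}')$, so it suffices to bound the path-space KL.

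To compute $\KL(\mathbb{P}\|\mathbb{P}')$, I would set $u_t := \bigl(b'_t(Y_t) - b_t(Y_t)\bigr)/\sqrt{2}$ and consider the exponential
$$Z_T = \exp\left(\int_0^T u_t \cdot \D B_t - \frac{1}{2}\int_0^T \|u_t\|^2 \,\D t\right).$$
Provided the Girsanov change of measure is valid, $\D\mathbb{P}'/\D\mathbb{P} = Z_T$, and $\tilde B_t := B_t - \int_0^t u_s\,\D s$ is a Brownian motion under $\mathbb{P}'$. Substituting $\D B_t = \D\tilde B_t + u_t\,\D t$ into the SDE $\D Y_t = b_t(Y_t)\,\D t + \sqrt{2}\,\D B_t$ yields $\D Y_t = b'_t(Y_t)\,\D t + \sqrt{2}\,\D\tilde B_t$, confirming that $\mathbb{P}'$ realizes $(Y_t)$ as a solution to the second SDE.

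The KL is then
$$\KL(\mathbb{P}\|\mathbb{P}') = -\E_{\mathbb{P}}[\log Z_T] = -\E_{\mathbb{P}}\left[\int_0^T u_t \cdot \D B_t\right] + \frac{1}{2}\E_{\mathbb{P}}\left[\int_0^T \|u_t\|^2\,\D t\right].$$
Under the integrability hypothesis the stochastic integral is a true martingale, so its expectation vanishes, leaving $\KL(\mathbb{P}\|\mathbb{P}') = \frac{1}{4}\int_0^T \E\|b_t(Y_t)-b'_t(Y_t)\|^2\,\D t$, which is bounded above by $\int_0^T \E\|b_t(Y_t)-b'_t(Y_t)\|^2\,\D t$. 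Combining with data processing closes the claim.

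The main obstacle is justifying the Girsanov step under only the $L^2$ hypothesis rather than the stronger Novikov condition $\E\exp\bigl(\tfrac{1}{2}\int_0^T\|u_t\|^2\,\D t\bigr)<\infty$ that classical statements require. I would resolve this by a standard localization: define stopping times $\tau_n = \inf\{t \ge 0 : \int_0^t\|u_s\|^2\,\D s \ge n\} \wedge T$, apply Girsanov on each $[0,\tau_n]$ where the truncated quadratic variation is bounded (so Novikov is trivial), and then pass $n \to \infty$ using Fatou's lemma on the path-space KL together with lower semicontinuity. The $L^2$ assumption guarantees $\tau_n \to T$ almost surely, so the final integral bound is preserved in the limit. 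Apart from this technical verification, every step is a mechanical application of Girsanov, the martingale property, and data processing.
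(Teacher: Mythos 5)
The paper does not actually prove this statement: it imports it verbatim from Section 5.2 of the cited reference and explicitly says it will be used as a black box. That said, your argument is essentially the proof given in that source: pass to path space, apply the data-processing inequality to reduce $\KL(q\|q')$ to $\KL(\mathbb{P}\|\mathbb{P}')$, use Girsanov to write the Radon--Nikodym derivative, observe that the stochastic-integral term is a true martingale under the $L^2$ hypothesis so the KL equals half the expected quadratic variation of the drift discrepancy, and repair the missing Novikov condition by localizing with the stopping times $\tau_n$ and invoking lower semicontinuity of KL. Two remarks. First, because of the $\sqrt{2}$ diffusion coefficient your computation actually yields the sharper constant $\tfrac14\int_0^T \E\,\norm{b_t(Y_t)-b'_t(Y_t)}^2\,\D t$, of which the stated bound with constant $1$ is a weakening, so nothing is lost. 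Second, two points deserve more care than your sketch gives them: identifying the tilted measure $Z_T\,\D\mathbb{P}$ with the path law of $Y'$ requires weak uniqueness for the second SDE (implicit in the theorem's phrase ``the solutions to''), and the localization-plus-limit step is where the genuine technical content of the reference's argument lives, so as written your proposal is a correct plan whose hardest step is deferred rather than executed. Overall the approach is sound and matches the cited proof.
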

This manuscript will rely on Theorem~\ref{thm:girsanov} as a black-box for future proofs. There are also several more discrepancies between this conceptual model of diffusion models and their practical implementation. First, $q_T$ is unknown and in practice one replaces it with the normal Gaussian. Second, one cannot perfectly simulate the SDE in Eq.\ref{eq:reverse} and must rely on numerical approximations. The errors incurred by both differences can be subsumed into the final distributional error bound with polynomial scaling with respect to the relevant parameters~\cite{DBLP:conf/iclr/ChenC0LSZ23}. 

\section{Details from Section~\ref{sec:bias_embed_biased_gen}}

\label{app:correlation}
\subsection{Biased Embeddings Correlate with Representationally Imbalanced Generations}

To provide empirical support for Theorem~\ref{thm:lipschitz}, we study gender bias in occupations for CLIP embeddings used as inputs for SD2.1 and images generated by SD2.1. We consider the $\texttt{Professions}$ dataset containing portraits of people in 146 different professions from SD2.1 \cite{luccioni2023stable}. For each profession, we compute the ratio of cosine similarity between the CLIP embeddings of the prompts ``\texttt{Profession, man}" and ``\texttt{Profession}," and the cosine similarity of the CLIP embeddings of the prompts ``\texttt{Profession, woman}" and ``\texttt{Profession}." If this ratio is above 1, then this profession is biased in embedding space towards men over woman. To measure the bias in SD2.1 outputs, we sort the image generations into men or women categories with CLIP-ViT-B/32. In Figure~\ref{fig:clip_distribution}, there is a mild correlation between gender bias in CLIP embeddings and gender bias in image generations for a given occupation.

\begin{figure}[H]
    \centering
\includegraphics[width=0.5\linewidth]{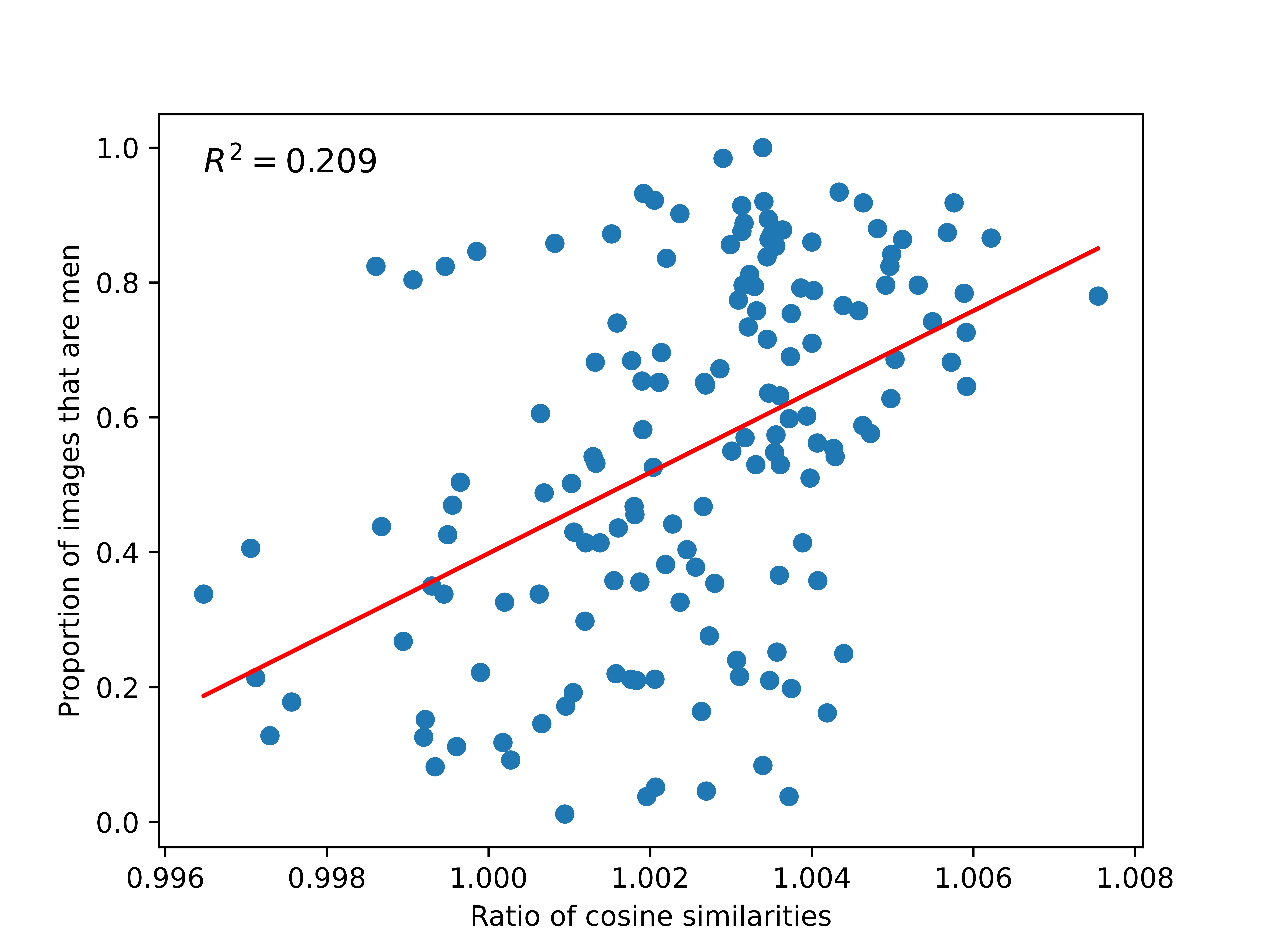}
    \caption{Each point represents a profession from the $\texttt{Professions}$ dataset \cite{luccioni2023stable}. The x-value is the ratio of cosine similarities between original and gendered versions of the prompt, and the y-value is the proportion of images that are classified as men. Line of best fit is in red and $R$-squared is reported. 
}
    \label{fig:clip_distribution}
\end{figure}

\subsection{Additional Details of Diffusion-From-Scratch Training}
\label{app:generations}
To better disentangle whether our experimental results above are from a biased embedding or a biased set of training data, we train a conditional diffusion model from scratch with balanced training data across classes but with biased prompt embeddings. Using this model, we test whether generations from this model for a given biased prompt are imbalanced. 

To probe this, we first took the the w2vNEWS embedding studied in \cite{bolukbasi2016man}, and in particular train our diffusion model on men and women in three categories: nurses (the second-highest female-biased occupation in \cite{bolukbasi2016man}), philosophers (the fourth-highest male-biased occupation in \cite{bolukbasi2016man}), and generic people. See Table \ref{tbl:words-cos-sim} for the cosine similarities between these embeddings. Within each of these three categories, we trained on 12000 images, 6000 men and 6000 women. Because we cannot obtain these images organically, we made a synthetic dataset using Stable Diffusion. We highlight here that for each subject (nurse, philosopher, or generic person), we wanted man vs. woman to be the main difference in image distribution. As such, we controlled all generations to be white, with black hair, and have the same instructions (see below). This is not to say that there are no differences \emph{between} classes; one thing we notice, for instance, is that the nurses appear much younger and the philosophers appear much older. For the purposes of our experiment, however, these differences are not important, as we only label men and women. We also note that we specify the clothing and background for each subject to be clearly recognizable, so that data is easy to label. For instance, a generated image with someone in blue scrubs is clearly a nurse, and a generated image with a library background is clearly a philosopher.

\begin{figure}
    \centering
    \includegraphics[width=0.7\linewidth]{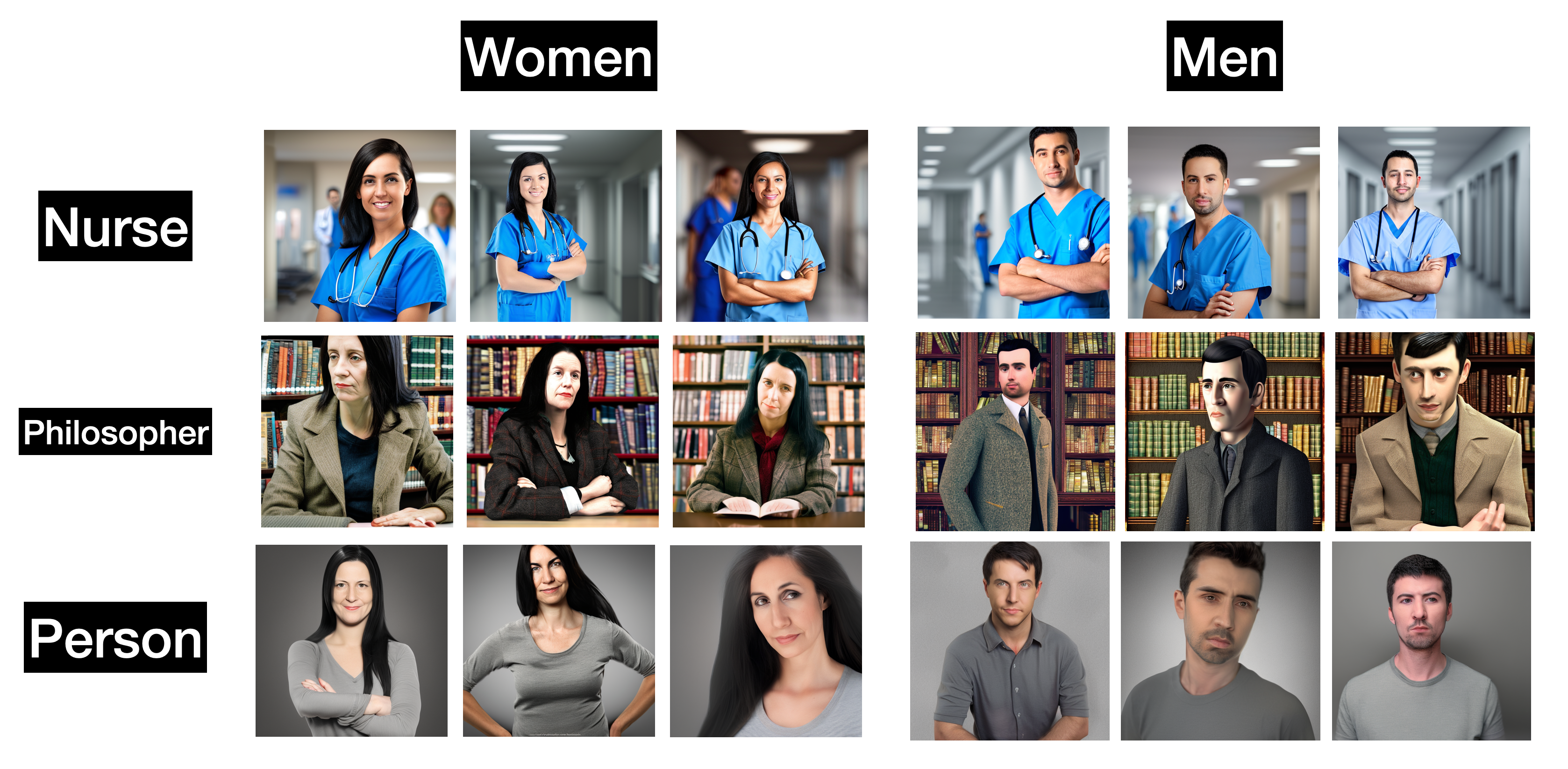}
    \caption{For each category in the six classes, we sampled three random images among the 6000 synthetic training images and display them here.}
    \label{fig:synthetic-data}
\end{figure}

To operationalize our diffusion model, we implemented a conditional diffusion model from scratch based on public reference implementations, re-implemented core modules to handle our biased input embedding, and trained it on the 36000 images. Because of compute limitations, we could only train a model to output 64x64 images. See Section \ref{app:model-config} in the Appendix for full details on these changes. 

Data labeling was a challenge, because generated model images were small (64x64). Additionally, generation quality was somewhat poor at times (as one might expect with a bespoke model using synthetic data); for instance, some generated image did not contain an actual person. See Figure~\ref{fig:genmodel}. As such, to label our data as man/woman, we first manually filtered generations for low quality (e.g. pictures where one cannot discern an actual person present), unidentifiable gender, or inconsistent occupation; then, we adopted a consensus-based approach with two reviewers to sort the image generations into men or women groups. For 170 generated images for the following three prompts (``nurse", ``person", and ``philosopher"), our consensus approach yielded 109 ``nurse" images, 130 ``philosopher" images, and 139 ``person"" images. 

Table \ref{tab:props} illustrates our results. We see that, even with a balanced set of training data, the majority of nurses were classified as women (59.6\%) and the majority of philosophers were classified as men (55.4\%). Simply generating an image conditioned on ``person," however, yields balanced representation. The frequency of men or women groups for both philosophers and nurses were both equal in the training data, but the diffusion model still exhibits a bias towards female nurses over male nurses and male philosophers over female philosophers. Because we controlled for imbalances in training data distribution in this experiment, this confirms our hypothesis that biased embeddings alone can cause biased outputs. 

\begin{figure}
    \centering
    \includegraphics[width=0.7\linewidth]{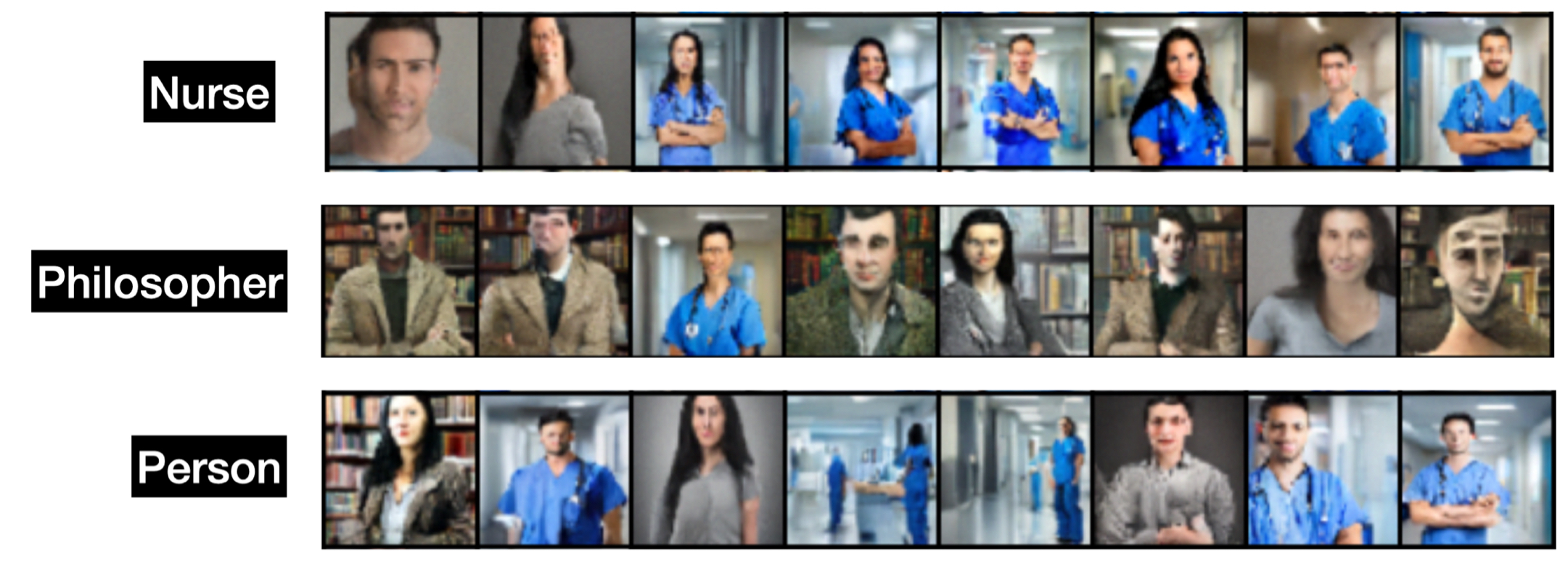}
    \caption{We illustrate eight random samples from each of the three classes generated from our trained diffusion model.}
    \label{fig:genmodel}
\end{figure}

\begin{table}[]
\centering
\begin{tabular}{lr} \\
Generation & Proportion \\ \hline
Nurse & 0.596 \\
Person & 0.504 \\
Philosopher & 0.446 
\end{tabular}
\caption{Proportion of women in 200 generations from each class, sampled from our diffusion model trained with a biased text embedding but an unbiased dataset.}
\label{tab:props}
\end{table}

\begin{table}[]
\centering
\begin{tabular}{l|rrr} & Nurse & Person & Philosopher \\ \hline
\texttt{Man} & 0.255 & 0.534 & 0.290 \\
\texttt{Woman} & 0.441 & 0.547 & 0.176          
\end{tabular}
\caption{Cosine similarities between man/woman and nurse/person/philosopher in the w2vNEWS embedding.}
\label{tbl:words-cos-sim}
\end{table}

\subsubsection{Synthetic Data Generation.}

\label{app:synthetic}
To get 6000 images across six different classes, we used Stable Diffusion to create synthetic data. In general, we thought these images were high quality and could be used for our training pipeline; Figure \ref{fig:synthetic-data} illustrates three random images from each of the six classes. Because our model is trained to generate 64x64 images, we then downscaled the output images to 64x64. The precise prompts used to generate 6000 images of each class are listed below: 

\begin{itemize}
    \item \textbf{Male Nurse:} Create a realistic image of a male nurse standing confidently in the center of a modern hospital setting. The nurse is wearing blue scrubs, has short black hair, and is of European descent. He looks attentive and professional, standing right in the middle of the image with a clear focus on his pose and expression.
    \item \textbf{Female Nurse:} Create a realistic image of a female nurse standing confidently in the center of a modern hospital setting. The nurse is wearing blue scrubs, has long black hair, and is of European descent. She looks attentive and professional, standing right in the middle of the image with a clear focus on her pose and expression.
    \item \textbf{Male Philosopher:} Create a realistic photo image of a Caucasian male philosopher, situated in the center of a classic library background. He wears a tweed jacket and has short black hair, neatly styled. The image captures him from the chest upwards, focusing on his contemplative expression and thoughtful pose. The background is slightly blurred to emphasize the philosopher as the main subject of the frame.
    \item \textbf{Female Philosopher:} Create a realistic photo image of a Caucasian female philosopher, positioned in the center of a library background. She wears a tweed jacket and has long black hair, neatly styled. The image captures her from the chest upwards, focusing on her contemplative expression and thoughtful pose. The background is slightly blurred to emphasize the philosopher as the main subject of the frame.
    \item \textbf{Man, Generic:} Create a realistic photo image of a Caucasian man wearing a gray shirt, positioned in the center of a neutral background. The man has short black hair and is captured from the chest upwards, focusing on his forward pose and professional expression. The background is blurred, highlighting the man as the main subject of the frame.
    \item \textbf{Woman, Generic:} Create a realistic photo image of a Caucasian woman wearing a gray shirt, positioned in the center of a neutral background. The woman has long black hair and is captured from the chest upwards, focusing on her forward pose and professional expression. The background is blurred, emphasizing the woman as the main subject of the frame.
\end{itemize}

\subsubsection{Model Configuration.} 
\label{app:model-config}

\textbf{Adding in our embedding.} The base implementation of a conditional diffusion model that we base our implementation on takes as input to the UNet (the neural network that learns the score) some vector $v \in \mathbb{R}^{256}$ where $v = f_{\text{pos}}(t) + f_{\text{embed}}(x)$, where $f_{\text{pos}}: \mathbb{R} \to \mathbb{R}^{256}$ is a sinusoidal positional encoding of the time and $f_{\text{embed}}: \mathbb{R} \to \mathbb{R}^{256}$ is an embedding of class (a number from 0 to 9) in 256-dimensional space that gets learned as the model runs. In our diffusion model, we changed the positional encoding of time to instead be of the form $f_{\text{pos}}: \mathbb{R} \to \mathbb{R}^{128}$, and concatenate the positional encoding with a text embedding of the input prompt in $\mathbb{R}^{128}$. Since every embedding in w2vNEWS is in $\mathbb{R}^{300}$, we use a Johnson-Lindenstrauss projection to reduce its dimension to 128. 

\textbf{Training on Multiple Words.} We use the w2vNEWS embedding here because \cite{bolukbasi2016man} conducts a rigorous study of biases in this embedding, and we failed to find such a complete study for CLIP's text embeddings. Because we could only condition on \emph{words} in the embedding, however, mechanically we had to train every training sample/image on three words that labeled the image; see below: 
\begin{itemize}
    \item Male Nurse: Man, Nurse, Person
    \item Female Nurse: Woman, Nurse, Person
    \item Male Philosopher: Man, Philosopher, Person
    \item Female Philosopher: Woman, Philosopher, Person
    \item Woman (Generic): Woman, Person, Woman
    \item Man (Generic): Man, Person, Man 
\end{itemize}

\textbf{Training Details}. We used standard hyperparameters found in \cite{nichol2021improved} and elsewhere.
\begin{itemize}
    \item Batch size of 50. 
    \item Max learning rate of $10^{-4}$ on a 1cycle learning rate scheduler \cite{smith2018superconvergence}. 
    \item Keep an exponential moving average of models for stability, with $\beta = 0.995$. 
\end{itemize}

\section{Details from Section 4}

\subsection{Multicalibration}
\label{app:multicalibration}
In \Cref{sec:fairness}, we define a notion of fairness based on the multiaccuracy framework. However, note that the protection offered by multiaccuracy in the example above is fairly weak. In particular, consider the case where $\cC = \{I_1, I_2\}$, the true quality of images in $I_1$ and $I_2$ is (roughly) uniformly distributed between 0 and 1, and consider the auditing function
\begin{equation*}
    s(b, i) = \begin{cases}
        s^*(b, i) & \text{if } i \in I_1\\
        0.5 & \text{if } i \in I_2
    \end{cases}
\end{equation*}

Note that $s$ is $(\cC, 0)$-multiaccurate since $\E_{i \sim I_2} [s(b, i)] = \E_{i \sim I_2} [s^*(b, i)] = 0.5$. However, $s$ clearly performs much worse on images of male doctors than female doctors. In particular, assuming that our generative model is reasonably good, we would expect all images $i$ generated for the prompt $b = \text{`doctor'}$ to have a true score $s^*(b, i) > 0.5$, so our auditing function $s$ would consistently give male doctors a higher score than female doctors. Note that this issue can be partially aleviated by definining a richer class of images $\cC$. However, it is also possible to define a stronger notion of fairness that avoids this issue. 

\begin{definition}[Multicalibration]
    Let $\cC \subseteq 2^\cI$ be a collection of subsets of $\cI$ and $\alpha \in [0, 1]$. An auditing function $s$ is $(\cC, \alpha)$-multicalibrated for prompt $b \in \cP$ if, for all $I \in \cC$ and for all $I_v = \{i \in I \: | \: s^*(b, i) = v \}$ where $v \in [0, 1]$,
    $$\left|\E_{i \sim I_v} [s^*(b, i)-s(b, i)]\right| \leq \alpha.$$
\end{definition}

Note that multicalibration is equivalent to multiaccuracy with $\cC$ defined as the level sets of true alignment scores of the original subsets of images. Thus, this definition implicitly creates a richer class of images. Moreover, note that the example function $s$ defined above is not $(\cC, \alpha)$-multicalibrated for any $\alpha < 0.5$, since for $v = 1$, $\E_{i \sim I_{2, v}} [s^*(b, i)-s(b, i)] = 0.5$. However, while this is a stronger notion of fairness, there are still ways in which a function that is $(\cC, \alpha)$-multicalibrated may behave differently on sets in $\cC$; for example, it is possible that a function $s$ that is $(\cC, \alpha)$-multicalibrated has more variance in its scores for images of female doctors than male doctors with some fixed true score $v$. 

\subsection{Average-then-Score} \label{app:avgscore}

Here, we define \textit{average-then-score}, an alternative method for calculating alignment scores based on multimodal embeddings. In this approach, to calculate the alignment score of a model $M$ for a prompt $b$, we start by sampling a set $I$ of images from $M(b)$. We then calculate $\bar{e} = \sum_{i \in I} e_\cI(i).$ We return the score $\cos(e_\cP(b), \bar{e})$ scaled to $[0, 1]$. Note that the key difference is when we take the average - in the original approach, the average is taken after the scores are calculated, whereas here we take the average of the image embeddings before calculating the score. To understand why these differ, we show the following result.

\begin{restatable}{theorem}{avgthenscore}
    For a prompt $b$ and a set of images $I$, if $\cos(e_\cP(b), e_\cI(i)) \geq 0$ for all $i \in I$, average-then-score is lower bounded by score-then-average. 
\end{restatable}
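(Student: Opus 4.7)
The plan is to rewrite both quantities as expressions in a single inner product against $e_\cP(b)$, after which the inequality collapses to the fact that an average of unit vectors has norm at most one. Write $u := e_\cP(b)$ and $v_i := e_\cI(i)$; by definition of the multimodal embedding these are all unit vectors. Let $\bar v := \frac{1}{|I|}\sum_{i \in I} v_i$ denote the mean image embedding. Because each $v_i$ has unit norm, score-then-average reduces to
\[
\frac{1}{|I|}\sum_{i \in I} \cos(u, v_i) \;=\; \frac{1}{|I|}\sum_{i \in I} \langle u, v_i\rangle \;=\; \langle u, \bar v\rangle,
\]
whereas average-then-score equals $\cos(u, \bar v) = \langle u, \bar v\rangle / \|\bar v\|$. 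The affine rescaling $x \mapsto (x+1)/2$ into $[0,1]$ is monotone and is applied identically to both quantities, so it is enough to compare the raw cosines.

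The main step is a two-line estimate. First, the triangle inequality gives $\|\bar v\| \leq \frac{1}{|I|}\sum_i \|v_i\| = 1$. Second, the hypothesis $\cos(u, v_i) \geq 0$ for every $i \in I$ is exactly $\langle u, v_i\rangle \geq 0$, and averaging preserves nonnegativity, so $\langle u, \bar v\rangle \geq 0$. Combining the two, dividing a nonnegative number by a positive scalar that is at most one can only increase it, yielding
\[
\cos(u, \bar v) \;=\; \frac{\langle u, \bar v\rangle}{\|\bar v\|} \;\geq\; \langle u, \bar v\rangle,
\]
which is precisely the claim that average-then-score dominates score-then-average.

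I do not expect a real obstacle here; the only subtlety is the degenerate case $\|\bar v\| = 0$, where $\cos(u, \bar v)$ is undefined but $\langle u, \bar v\rangle = 0$ as well, so score-then-average is zero and the inequality is trivial under any reasonable convention. The one thing worth flagging in the write-up is the exact role of the sign hypothesis: without $\langle u, v_i\rangle \geq 0$ for all $i$, dividing by $\|\bar v\| \leq 1$ could flip the direction of the inequality for a negative inner product, so the assumption is essential rather than cosmetic. Conceptually the lemma just says that averaging unit vectors contracts length (a Jensen-style shrinkage), and this contraction pushes average-then-score upward relative to score-then-average exactly when similarities are nonnegative.
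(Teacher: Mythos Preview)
Your proof is correct and follows essentially the same approach as the paper: rewrite both scores as $\langle u,\bar v\rangle$ versus $\langle u,\bar v\rangle/\|\bar v\|$, bound $\|\bar v\|\le 1$ via the triangle inequality, and use nonnegativity of $\langle u,\bar v\rangle$ to conclude. If anything, your write-up is slightly more careful than the paper's, since you explicitly identify where the sign hypothesis is needed (the paper states $\tfrac{1}{n}\le\tfrac{1}{\|\sum\bv_i\|}$ and declares the proof complete without noting that multiplying both sides by $\bu\cdot\sum\bv_i$ preserves the inequality only when that inner product is nonnegative), and you address the degenerate case $\|\bar v\|=0$.
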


\begin{proof}
    We prove a more general result. Given unit vectors $\bu$ and $\bv_1, ..., \bv_n$, where $\cos(\bu, \bv_i) \geq 0$ for all $i \in [n]$, we show that 
    $$\frac{1}{n} \sum_{i \in [n]} \cos(\bu, \bv_i) \leq \cos\left(\bu, \sum_{i \in [n]} \bv_i\right).$$
    First, we see that 
    \begin{align*}
        \frac{1}{n} \sum_{i \in [n]} \cos(\bu, \bv_i) &= \frac{1}{n} \sum_{i \in [n]} \bu \cdot \bv_i\\
        &= \frac{1}{n} \cdot \bu  \cdot \sum_{i \in [n]} \bv
    \end{align*}
    On the other hand, we see that
    \begin{align*}
        \cos\left(\bu, \sum_{i \in [n]} \bv_i\right) &= \frac{\bu \cdot \sum_{i \in [n]} \bv_i}{\|\bu\|\|\sum_{i \in [n]} \bv_i\|}\\
        &= \frac{1}{\|\sum_{i \in [n]} \bv_i\|} \cdot \bu \cdot \sum_{i \in [n]} \bv_i
    \end{align*}
    Finally, since $\|\bv_i\| = 1$, by triangle inequality we know that $\|\sum_{i \in [n]} \bv_i\| \leq n$. Thus, 
    $$\frac{1}{n} \leq \frac{1}{\|\sum_{i \in [n]} \bv_i\|},$$
    completing the proof. 
    
    To show the original theorem statement, for a prompt $b$ and a subset of images $I$, consider $\bu = e_\cP(b)$ and $\bv_i = e_\cI(i)$ for all $i \in I$. Scaling the cosine similarity to $[0, 1]$ does not affect the result, so the left side of the inequality is equivalent to score-then-average and the right side of the inequality is equivalent to average-then-score. 
\end{proof}

Although this method still biased from the perspective of multiaccuracy, we provide some intuition on why it may reward models that output images with more balanced attributes. Consider a base prompt $b$ and attributes $a_1, ..., a_k$. Let $I_\ell$ be a subset of images corresponding to prompt $b$ with attribute $a_\ell$, and let $I = \bigcup_{\ell \in [k]} I_\ell$. Since the goal of the embedding space is to map prompts close to images that match it, we should expect that $e_\cP(b)$ is close to $e_\cP(i)$ for all $i \in I$. Thus, we should also expect that $e_\cP(b)$ is positioned somewhere ``between" the clusters of vectors corresponding to each subset of images, though this vector may be closer to some clusters than others. If this intuition is correct, averaging image vectors with more diverse attributes should bring us closer to the vector for the prompt than averaging image vectors with a single attribute, so our score function should reward some amount of diversity, though the exact ratios at which it is maximized may differ. We explore this hypothesis and evaluate this method in \Cref{app:eval}. 

One downside of using average-then-score over score-then-average is that it fails to take into account variance. In particular, a set of image vectors could combine to give a very high score because they happen to average in the same direction as the embedding of the prompt even though none are individually close to the prompt. Thus, if used in practice, it will be important to add a term that penalizes variance in the image vectors. 

\subsection{Evaluation Results}\label{app:eval}

\subsubsection{Text-Text Bias}

As shown in \Cref{sec:fairembeddings}, biases on attributes in the prompt embedding space imply bias for images close to the attributes in the image embedding space. Thus, we start by analyzing the bias present in the prompt embedding space $e_\cP$ of CLIP. Our results are shown in \Cref{tab:text-text-bias}. 

\begin{table}[htbp]
    \centering
    \begin{tabular}{lccccc}
    \hline
    Occupation   & Male  & Female & Delta & Average \\ \hline
    firefighter  & 0.971 & 0.919  & 0.052 & 0.959   \\
    chemist      & 0.962 & 0.923  & 0.039 & 0.955   \\
    chef         & 0.954 & 0.918  & 0.036 & 0.950   \\
    architect    & 0.957 & 0.924  & 0.033 & 0.955   \\
    biologist    & 0.978 & 0.949  & 0.029 & 0.972   \\
    professor    & 0.968 & 0.950  & 0.018 & 0.966   \\
    doctor       & 0.962 & 0.947  & 0.015 & 0.965   \\
    teacher      & 0.962 & 0.947  & 0.015 & 0.963   \\
    librarian    & 0.962 & 0.951  & 0.011 & 0.969   \\
    hairdresser  & 0.951 & 0.958  & -0.007 & 0.967 \\
    receptionist & 0.954 & 0.962  & -0.008 & 0.970 \\
    nurse        & 0.951 & 0.973  & -0.022 & 0.974 \\ \hline
    \end{tabular}
    \caption{Text-Text Bias in CLIP}
    \label{tab:text-text-bias}
\end{table}

We evaluate 12 occupations for bias on two genders - male and female. For every occupation $b$, the male column shows the cosine similarity between the embedding of $b$ and the embedding of $\text{`male'}+ b$, and the female column shows the cosine similarity between the embedding of $b$ and the embedding of $\text{`female'}+ b$. The entries are sorted in increasing order of delta, the difference between the male and female similarity scores. We see that our results match several of the biases observed by \cite{bolukbasi2016man}. In particular, ``nurse," ``receptionist" and ``hairdresser" are \textit{she} professions, while ``doctor" and ``architect" are \textit{he} professions. The average column measures the cosine similarity between the embedding of $b$ and the average of the embeddings of $\text{`male'}+b$ and $\text{`female'}+ b$. It is interesting to note that these are always  closer to the larger similarity score, and are sometimes larger than both. This motivates the average-then-score approach discussed in \Cref{app:avgscore} since we would hope that the image embeddings behave similarly. 

\subsubsection{Text Embedding Bias}

In this section we investigate the bias in the similarities between occupation prompt vectors and image prompt vectors with specific genders. Our results are shown in \Cref{tab:text-image-bias}. 

\begin{table}[htbp]
    \centering
    \begin{tabular}{lcccc}
    \hline
    Occupation & Male & Female & Delta & Average\\
        \hline
        doctor & 0.800 & 0.780 & 0.020 & 0.801\\
        nurse & 0.772 & 0.833 & -0.061 & 0.813\\\hline
    \end{tabular}
    \caption{Text-Image Bias in CLIP}
    \label{tab:text-image-bias}
\end{table}

We evaluate doctors and nurses for bias on two genders - male and female. To do so, we collect images of male doctors, female doctors, male nurses and female nurses manually from Google Images. We then calculate the average cosine similarity between the embedding of `doctor' and the embeddings of all male and female medical professionals respectively. Next, we do the same for ``nurse." Note that the set of images is the same across both occupations, so there is no inherent difference in quality; by symmetry, we should expect that images of nurses are as close to the prompt ``doctor" as images of doctors are to the prompt ``nurse." This is also why we restrict our attention to these two occupations; without the control of using the same sets of images for different professions, it is possible that the male pictures happen to be higher quality than the female pictures of vice-versa. 

The delta column shows that for the same set of images, male medical professionals got a 0.02 higher score than female medical professionals on average for the prompt ``doctor," and a 0.061 lower score for the prompt ``nurse." Thus, this clearly demonstrates an underlying bias in the auditing CLIPScore function; models that generate only male doctors and only female nurses get over $5\%$ higher alignment scores.  

The average column measures the cosine similarity between the occupation $o$ and the average of the embeddings of the male and female images. We see that the average of the images performs better than either individual gender for the prompt ``doctor," but lands somewhere in between for the prompt nurse. We explore the implications of this further in the next section. 

\subsubsection{Mitigating Bias} \label{sec:mitigating-bias}

In this section, we explore how our proposed auditing methods, average-then-score and subclass-score, compare to the original method score-then-average on the prompts ``doctor" and ``nurse." The results are visualized in \Cref{fig:auditing-bias}. We find that average-then-score seems to perform roughly as well as score-then-average, biased towards male images for ``doctor" and female images for ``nurse." However, subclass-score performs significantly better for both, staying roughly consistent as the gender ratio changes. Thus, subclass-score is the most promising step for alleviating gender bias in alignment scores for text-to-image models. 

\begin{figure}
    \centering
    \includegraphics[width=0.7\linewidth]{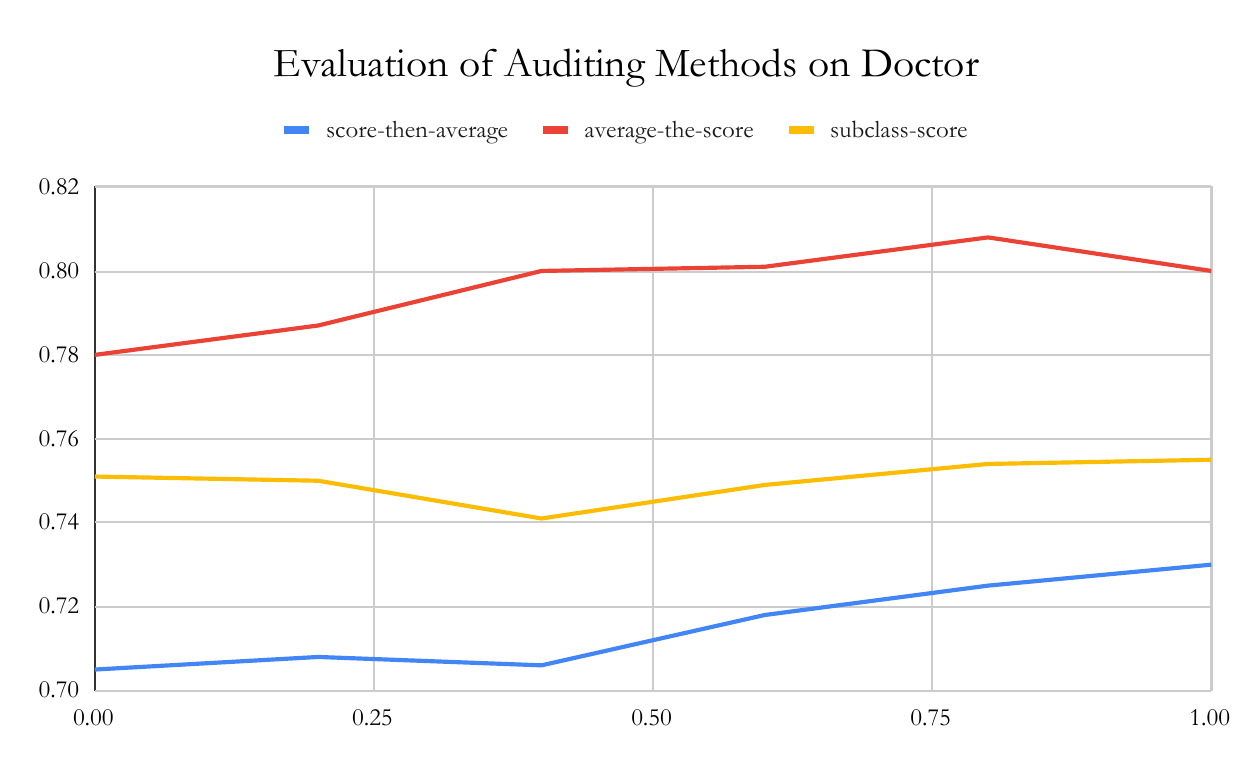}
    \includegraphics[width=0.7\linewidth]{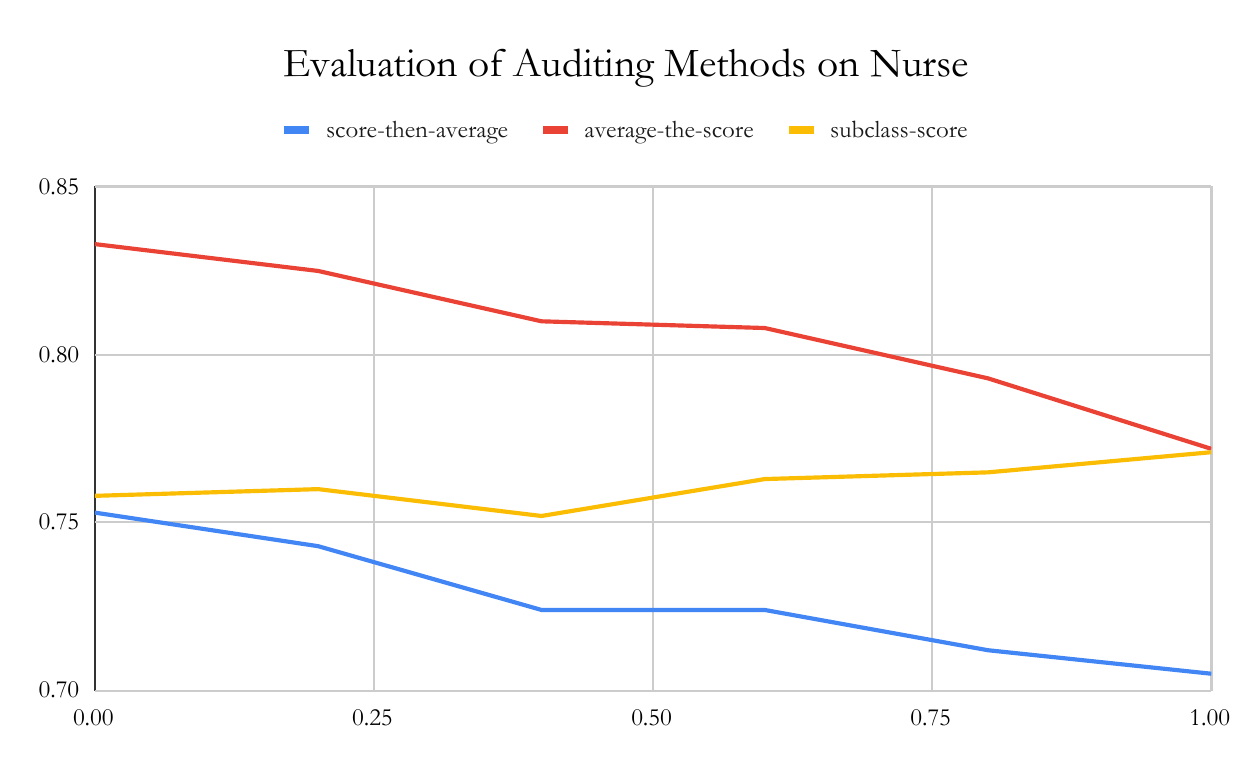}
    \caption{Bias in auditing methods. The x-axis represents the proportion of the images that are male and the y-axis represents the score.}
    \label{fig:auditing-bias}
\end{figure}
\newpage 

\end{document}